\newcommand{\blind}{0}
\newcommand\myshade{85}
\colorlet{mylinkcolor}{YellowOrange}
\colorlet{mycitecolor}{Aquamarine}
\colorlet{myurlcolor}{violet}
\renewcommand{\hat}{\widehat}
\renewcommand{\tilde}{\widetilde}
\renewcommand{\bar}{\overline}
\newcommand{\bfm}[1]{\ensuremath{\boldsymbol{#1}}} 
     \def\EE{\mathbb{E}}
   \def\bK{\bfm K}
     \def\PP{\mathbb{P}}
     \def\RR{\mathbb{R}}
 \def\cA{{\cal  A}}
\def\calB{{\cal  B}}
\def\calE{{\cal  E}} 
\def\calF{{\cal  F}}
\def\calM{{\cal  M}} 
\def\calO{{\cal  O}}
\def\calS{{\cal  S}} \def\cS{{\cal  S}}
\providecommand{\norm}[1]{\left\lVert#1\right\rVert}
\providecommand{\paran}[1]{\left( #1 \right)}
\providecommand{\brackets}[1]{\left[ #1 \right]}
\DeclarePairedDelimiterX{\infdivx}[2]{(}{)}{%
  #1 \; \delimsize\| \; #2%
}
\newtheorem{definition}{Definition}
\newtheorem{assumption}{Assumption}
\newtheorem{lemma}{Lemma}
\newtheorem{theorem}{Theorem}
\newtheorem{remark}{Remark}
\definecolor{royalpurple}{rgb}{0.47, 0.32, 0.66}
\begin{document}
\pagenumbering{arabic}

\def\spacingset#1{\renewcommand{\baselinestretch}%
{#1}\small\normalsize} \spacingset{1}

%
%
%

\def\TITLE{Transition Transfer $Q$-Learning for Composite Markov Decision Processes}

\if0\blind
{
  \title{\bf \TITLE}
  \author{
  	Jinhang Chai$^\flat$ \hspace{5ex}
    Elynn Chen$^\sharp$ \hspace{5ex}
    Lin Yang$^\dag$ \thanks{Correspondence to {\em Elynn Chen} \textless{}\url{elynn.chen@stern.nyu.edu}\textgreater{} and {\em Lin Yang} \textless{}\url{linyang@ee.ucla.edu}\textgreater{} } \hspace{2ex} 
    \\ \normalsize
    $^{\flat\sharp}$Princeton University \hspace{3ex}
    $^{\flat\sharp}$New York University \hspace{3ex} \\ \normalsize
    $^{\dag }$ University of California, Los Angeles
    }
  \maketitle
} \fi

\if1\blind
{
  \bigskip
  \bigskip
  \bigskip
  \begin{center}
    {\LARGE\bf Title}
\end{center}
  \medskip
} \fi

\bigskip
\begin{abstract}
\spacingset{1.08}
To bridge the gap between empirical success and theoretical understanding in transfer reinforcement learning (RL), we study a principled approach with provable performance guarantees.
We introduce a novel composite MDP framework where high-dimensional transition dynamics are modeled as the sum of a low-rank component representing shared structure and a sparse component capturing task-specific variations. This relaxes the common assumption of purely low-rank transition models, allowing for more realistic scenarios where tasks share core dynamics but maintain individual variations.
We introduce UCB-TQL (Upper Confidence Bound Transfer Q-Learning), designed for transfer RL scenarios where multiple tasks share core linear MDP dynamics but diverge along sparse dimensions. When applying UCB-TQL to a target task after training on a source task with sufficient trajectories, we achieve a regret bound of $\tilde{\mathcal{O}}(\sqrt{eH^5N})$ that scales independently of the ambient dimension. Here, $N$ represents the number of trajectories in the target task, while $e$ quantifies the sparse differences between tasks. This result demonstrates substantial improvement over single task RL by effectively leveraging their structural similarities. Our theoretical analysis provides rigorous guarantees for how UCB-TQL simultaneously exploits shared dynamics while adapting to task-specific variations.
\end{abstract}

\noindent%
{\it Keywords:}  Transfer Learning; Reinforcement Learning; Online $Q$-Learning; Upper Confidence Bound Algorithms; Low-Rank plus Sparse Structure. 
\vfill


\newpage
\spacingset{1.9} 

\addtolength{\textheight}{.1in}%

\section{Introduction}

Transfer reinforcement learning (RL) has emerged as a promising solution to the fundamental challenge of sample inefficiency in RL. By leveraging knowledge from related tasks, transfer learning aims to accelerate policy learning and improve performance in new environments without requiring extensive data collection. This approach has shown empirical success across various domains, from robotics to game playing, yet theoretical understanding of how transfer provably benefits RL remains limited.

Consider autonomous vehicle training as an illustrative example: core driving dynamics -- including vehicle physics, road rules, and basic navigation -- remain consistent across different driving scenarios. However, specific environments (urban vs. highway driving, varying weather conditions, different traffic patterns) introduce distinct variations to these core dynamics. This naturally suggests modeling transition dynamics as a combination of shared low-rank structure capturing common elements, plus sparse components representing scenario-specific variations.

We propose a composite MDP framework that formalizes this intuition: transition dynamics are modeled as the sum of a low-rank component representing shared structure and a sparse component capturing task-specific deviations. This structure appears in many real-world applications beyond autonomous driving -- robotic manipulation with different objects, game playing across varying environments, and resource management under changing constraints all exhibit similar patterns of {\em core shared dynamics} with {\em sparse task-specific variations}.

Our approach extends existing work in several important directions. Prior transfer and multi-task RL research has primarily focused on pure low-rank MDPs \citep{agarwal2023provable,lu2021power,cheng2022provable} or made direct assumptions about value or reward function similarity \citep{calandriello2014sparse,du2024misspecified,chen2024data,chai2025deep}. While sparsity has been studied in the context of value function coefficients, theoretical analysis of sparse transition structures -- particularly in combination with low-rank components -- remains unexplored. This gap is significant because transition dynamics often more directly capture task similarity than value functions.

We begin by addressing single-task learning within this composite structure, introducing a variant of UCB-Q-learning tailored specifically for composite MDPs, which may involve a high-dimensional ambient space. In contrast to previous work, we consider the high-dimensional setting where the feature dimensions $p, q \gg$ number of trajectories $N$, and the transition core $M^*$ is no longer a low-rank matrix. This departure from low-rank structures makes existing algorithms designed for linear MDPs inapplicable. 
Similarly, methods built for low-rank MDPs fail in our context due to the absence of low-rank assumptions in $M^*$. 

Our work provides the first theoretical guarantees for this setting, demonstrating how the algorithm successfully learns both shared and task-specific components. These results extend and complement the existing body of work on low-rank MDPs by explicitly handling structured deviations from low-rank assumptions \citep{du2019good, lattimore2020learning}. Unlike the approach in \cite{foster2021statistical}, which introduced a Decision-Estimation Coefficient (DEC) to characterize the statistical complexity of decision-making across various scenarios, our framework relies on distinct structural assumptions. This necessitates the development of new techniques, as discussed in detail in Section \ref{sec:challenge-single}.

Building on this foundation, we propose UCB-TQL (Upper Confidence Bound Transfer Q-Learning) for transfer learning in composite MDPs. UCB-TQL strategically exploits shared dynamics while efficiently adapting to task-specific variations. Our theoretical analysis demonstrates that UCB-TQL achieves dimension-independent regret bounds that explicitly capture dependencies on both rank and sparsity, showing how structural similarities enable efficient knowledge transfer.
In particular, we construct a novel confidence region (CR) for the sparse difference, thereby reducing the target sample complexity in the online learning process, as discussed in detail in Section \ref{sec:challenges-ucb-tql}. 

Our primary contributions are as follows. 
\begin{itemize}
\item A novel composite MDP model that combines {\em low-rank shared structure} with {\em sparse task-specific components}, while allowing high-dimensional feature spaces. This framework better captures real-world task relationships and provides a foundation for future work in multi-task and meta-learning settings.
\item The first theoretical guarantees for single-task RL under the high-dimensional composite transition structure, demonstrating how algorithms can effectively learn and utilize both shared and task-specific components.
\item A transfer Q-learning algorithm with {\em provable regret bounds} that explicitly characterize how structural similarities enable efficient knowledge transfer across tasks.
\end{itemize}

This work represents a significant step toward bridging the gap between empirical success of transfer RL and theoretical understanding by providing a rigorous analysis of how structural similarities in transition dynamics enable efficient knowledge transfer. Our results suggest new directions for developing practical algorithms that can systematically leverage shared structure while accounting for task-specific variations.

\subsection{Related Work}

\noindent
\textbf{Transfer RL.}
\cite{agarwal2023provable} studied transfer via shared representations between source and target tasks. With generative access to source tasks, they showed that learned representations enable fast convergence to near-optimal policies in target tasks, matching performance as if ground truth features were known. \cite{cheng2022provable} proposed REFUEL for multitask representation learning in low-rank MDPs. They proved that learning shared representations across multiple tasks is more sample-efficient than individual task learning, provided enough tasks are available. Their analysis covers both online and offline downstream learning with shared representations.
\cite{chen2022transfer,chen2024data,chai2025deep} analyzed transfer $Q$-learning without transition model assumptions, focusing instead on reward function similarity and transition density. These works established convergence guarantees for both backward and iterative $Q$-learning approaches.

Our work differs by studying transition models with low-rank plus sparse structures. This setting presents {\em unique challenges} beyond purely low-rank models, as we must identify and leverage an unknown low-rank space while also accounting for sparse deviations.

\smallskip
\noindent
\textbf{Single task RL under structured MDPs.}
Single-task RL under structured MDPs has evolved through several key advances:
Linear MDPs with known representations were initially studied by \cite{yang2020reinforcement}, leading to provably efficient online algorithms \citep{sun2019model,jin2020provably,zanette2020learning,neu2020unifying,cai2020provably,wang2019optimism}.

Low-rank MDPs extend this by requiring representation learning. Major developments include FLAMBE \citep{agarwal2020flambe} for explore-then-commit transition estimation, and REP-UCB \citep{uehara2022representation} for balancing representation learning with exploration. Recent work has expanded to nonstationary settings \citep{cheng2023provably} and model-free approaches like MOFFLE \citep{modi2024model}.
Related structured models include block MDPs \citep{du2019provably,misra2020kinematic,zhang2022efficient}, low Bellman rank \citep{jiang2017contextual}, low witness rank \citep{sun2019model}, bilinear classes \citep{du2021bilinear}, and low Bellman eluder dimension \citep{jin2021bellman}.

Our work introduces the composite MDPs with high-dimensional feature space and low-rank plus sparse transition, extending beyond pure low-rank models. We provide the first theoretical guarantees for UCB Q-learning under this composite structure.

\smallskip
\noindent
\textbf{Multitask RL and Meta RL.}
Research in multitask and meta-RL has evolved through several key theoretical advances. Early work by \cite{calandriello2014sparse} examined multitask RL with linear Q-functions sharing sparse support, establishing sample complexity bounds that scale with the sparsity rather than ambient dimension. \cite{hu2021near} extended this framework by studying weight vectors spanning low-dimensional spaces, showing that sample efficiency improves when the rank is much smaller than both the ambient dimension and number of tasks. \cite{arora2020provable} demonstrated how representation learning reduces sample complexity in imitation learning settings, providing theoretical guarantees for learning shared structure across tasks. \cite{lu2022provable} further developed this direction by analyzing multitask RL with low Bellman error and unknown representations, establishing bounds that improve with task similarity.

Task distribution approaches offered another perspective. \cite{brunskill2013sample} proved sample complexity benefits when tasks are independently sampled from a finite MDP set, while \cite{pacchiano2022joint} and \cite{muller2022meta} extended these results to meta-RL for linear mixture MDPs, showing how learned structure transfers to new tasks. In parallel, research on shared representations by \cite{d2020sharing} established faster convergence rates for value iteration under common structure, and \cite{lu2021power} proved substantial sample efficiency gains in the low-rank MDP setting.

Our composite MDP structure advances this line of work by explicitly modeling deviations from low-rank similarity through a sparse component. This framework captures more realistic scenarios where tasks share core structure but maintain individual variations, opening new theoretical directions for multitask and meta-learning approaches.

\section{Problem Formulation} 

\noindent
\textbf{Episodic MDPs.} 
We consider an episodic Markov decision process (MDP) with finite horizon.
It is defined by a tuple $\mathcal{M} = (\mathcal{S}, \mathcal{A}, P, r, \mu, H)$, where $\mathcal{S}$ denotes the state space, $\mathcal{A}$ represents the action space, $H$ is the finite time horizon, $r: \mathcal{S} \times \mathcal{A} \to [0, 1]$ the reward function, $P$ is the state transition probability, and $\mu$ is the initial state distribution.
A policy $\pi : \mathcal{S} \times [H] \rightarrow \mathcal{A}$ maps each state-time pair to an action that the agent takes in the episode. 

For each time step $h \in [H]$, the value function $V^\pi_h : \mathcal{S} \rightarrow \mathbb{R}$ evaluates the expected cumulative reward from following policy $\pi$ starting from state $s$ at time $h$, defined as $V^\pi_h(s) = \mathbb{E}\left[\sum_{h'=h}^{H} r_{h'}(s_{h'}, \pi(s_{h'})) \mid s_h = s\right]$, and $V^\pi_{H+1}(s) = 0$, while the action-value function $Q^\pi_h : \mathcal{S} \times \mathcal{A} \rightarrow \mathbb{R}$ evaluates the value of taking action $a$ in state $s$ at time $h$, given by
$Q^\pi_h(s, a) = r_h(s, a) + \mathbb{E}\left[\sum_{h'=h+1}^{H} r_{h'}(s_{h'}, \pi(s_{h'})) \mid s_h = s, a_h = a\right]$ and $Q^\pi_H(s, a) = r_h(s, a).$

The Bellman equation for $V^\pi_h$ and $Q^\pi_h$ can be expressed as:
\[V^\pi_h(s) = Q^\pi_h(s, \pi_h(s)), \ Q^\pi_h(s, a) = r_h(s, a) + [P V^\pi_{h+1}](s, a).\]
The Bellman optimally equations for the optimal value function 
and action-value function is as follows:
$$V^*_h(s) = \max_{a \in A}\left\{r_h(s, a) + [P V^*_{h+1}](s, a)\right\}, \quad V^*_{H+1}(s) = 0,$$
$$Q^*_h(s, a) = r_h(s, a) + [P V^*_{h+1}](s, a), \quad Q^*_H(s, a) = r_h(s, a).$$
The \textbf{cumulative regret} quantifies the performance discrepancy of an agent over episodes. Given an initial state $s_0\sim \mu$, for the $ n^{th} $ episode, the regret is the value difference of the optimal policy $ V^*(s_0) $ and the agent's chosen policy $ V^{\pi_n}(s_0) $ which based on its experience up to the beginning of the $ n^{th}$ episode and applied throughout the episode. Accumulating over $ N $ episodes, it is defined as:
\[Regret(N) = \sum_{n=1}^{N} \EE_\mu\left[ V^*(s_0) - V^{\pi_n}(s_0) \right] \]
The agent aims to learn a sequence of policies $ ( \pi_1, \ldots, \pi_N ) $ to minimize the cumulative regret. 
If the reward function has a linear feature representation, any additional regret from an unknown reward becomes a lower-order term and does not affect the regret's overall magnitude. For clarity of presentation, we assume the agent knows the reward function and focus primarily on estimating the transition probability.

\smallskip
\noindent
\textbf{Composite MDPs.}
Let $\phi(\cdot) \in \RR^p$ and $\psi(\cdot) \in \RR^q$ be feature functions where $p$ and $q$ can be large.
Consider probability transitions $\PP(s' | s, a)$ that can be fully embedded in the feature space via a core matrix $M^*$:
\[\PP(s' | s, a) = \phi(s, a)^\top \cdot M^* \cdot \psi(s').\]
Since feature dimensions $p$ and $q$ can be large, we need not know the exact feature functions - we can include many possible features to span the space. What matters is learning the structure of $M^*$ from data.

To capture how transition dynamics combine shared core elements with scenario-specific variations, we impose the following structured assumption on the transition matrix:
\begin{definition}[Composite MDPs] \label{ass:matrix}
A probability transition model $\PP:\cS\times\cA\rightarrow\Delta(\cA)$ can be fully embedded in the feature space characterized by two given feature functions $\phi(\cdot) \in \RR^p$ and $\psi(\cdot) \in \RR^q$ where both $p$ and $q$ can be large. The core matrix of the transition model decomposes as:
\[
\PP(s' | s, a) = \phi(s, a)^\top \cdot ( L^* +  S^*) \cdot \psi(s'),
\]
where  $ L^*$ is a low-rank incoherent matrix and $ S^*$ is a sparse matrix.
\end{definition}

\begin{remark}
The composite MDP model we propose differs from both the linear MDP and the low-rank MDP. While the linear MDP assumes a linear structure for the transition matrix and requires knowledge of the feature maps, the low-rank MDP does not demand such knowledge but constrains the feature map to a known function class. Our model allows high-dimensional feature space with a similar assumption to the low-rank MDP but augments it with a extra sparse component.
\end{remark}


\section{Single-Task UCB-$Q$-Learning under High-Dimensional Composite MDPs}
\label{sec:single-task}
This section introduces UCB-$Q$-Learning for composite MDPs with a high-dimensional feature space. Specifically, we consider the setting where the feature dimensions $p, q \gg N$, and the transition core $M^*$ is no longer a low-rank matrix. As a result, existing algorithms designed for linear MDPs are not applicable. Likewise, methods tailored for low-rank MDPs fail in our setting due to the absence of a low-rank structure in $M^*$. To address the challenges arising from our relaxed dimensionality constraints and the more complex MDP structure, novel algorithmic approaches are required.

For any tuples $(s_{i,h},a_{i,h})$ from episode $i$ and stage $h$: We define $\phi_{i,h}=\phi(s_{i,h},a_{i,h})$,  $\psi_{i,h}=\psi(s_{i,h})$, and $\bK_{\psi} := \sum_{s'\in\calS} \psi(s') \psi(s')^\top$. 
Our estimator is based on the following population-level equation at each step $h$, 
\begin{align} \label{eqn:population-regression}
&\EE\brackets{\psi_{i,h}^\top \bK_{\psi}^{-1} \mid s_{i,h}, a_{i,h}} = \sum_{s'} \PP(s' | s_{i,h}, a_{i,h}) \psi(s')^\top\bK_{\psi}^{-1} \nonumber \\
& = \sum_{s'} \phi_{i,h}^\top ( L^* +  S^*) \psi(s') \psi(s')^\top\bK_{\psi}^{-1} \nonumber \\
& =  \phi_{i,h}^\top ( L^* +  S^*).
\end{align}
This motivates us to use the sample-level counterpart of \eqref{eqn:population-regression} to estimate $L^*$ and  $S^*$. However, both $L^*$ and  $S^*$ are unknown. To recover the low-rank and sparse components, additional assumptions are required to ensure that the low-rank part can be separated from the sparse component. Below, we elaborate on the incoherence assumption and sufficient sparsity conditions.

\begin{assumption} \label{ass:low-rank-sparse}
Let \( L^* = U^* \Sigma^* V^{*T} \) be the singular value decomposition (SVD) of \( L^* \). 
We assume that:\\
(i) (Incoherence.) $\|U^*\|_{2,\infty},\|V^*\|_{2,\infty}\le \sqrt{\frac{\mu r}{p}}$.\\
(ii) (Sufficient sparsity.) Matrix $ S^*$ contains at most $s$ non-zero entries, where $s \leq \bar s:=\frac{\max\{p,q\}}{4C_S\mu r^3}$, for some constant $C_S$.
\end{assumption}

\begin{remark}  
The incoherence condition ensures that the singular vectors of a low-rank matrix are not overly concentrated in any single direction or entry, a property that is crucial for matrix completion \citep{candes2012exact}. In our setting, it also facilitates the separation of the sparse component from the low-rank matrix. When \( r \) and \( \mu \) are treated as constants, the maximum permissible sparsity level scales linearly with \( p \).  
Moreover, as shown in \citep{candes2010power}, the incoherence condition holds for a broad class of random matrices.  
\end{remark}

We consider the online learning setting and propose to estimate $L^*$ and $S^*$ in the composite MDP by optimizing the following hard-constrained least-square objective for each episode $n\in[N]$ with collected tuples $(s_{i,h},a_{i,h})$ from previous episode $i\in[n]$ and stage $h\in[H]$:
\begin{equation}
\begin{aligned}
(\hat{L}_n, \hat{S}_n) \in 
&\underset{L, S \in \mathbb{R}^{p \times q}}{\arg\min}\;  \sum_{i < n, h \leq H} \| \psi_{i,h}^\top \bK_{\psi}^{-1} -\phi_{i,h}^\top (L + S) \|_2^2  \\
&\text{s.t.}\quad L = U\Sigma V^T, \quad \|U\|_{2,\infty} \leq \sqrt{\frac{\mu r}{p}}, \\
&\hspace{2ex} \quad \|V\|_{2,\infty} \leq \sqrt{\frac{\mu r}{q}}, \|S\|_{0}\le s
\end{aligned}
\label{eq:rl-rank-sparse}
\end{equation}

\subsection{UCB-$Q$ Learning for High-Dimensional Composite MDPs}
Since the transition dynamics $P$ are typically unknown, we must leverage observed data to approximate the underlying model parameters.
To balance the exploration-exploitation trade-off, we adopt the optimism-in-the-face-of-uncertainty principle by employing an Upper Confidence Bound (UCB)-based algorithm.
We begin by constructing the confidence region:
\begin{align}
\label{def:confidence-region}
\mathcal{B}_n=\{( L, S) \mid \norm{ L-\hat L_n}_F^2+\norm{ S-\hat S_n}_F^2\leq \beta_n\}\end{align}
where $\hat L_n$ and $\hat S_n$ are estimated by ~\eqref{eq:rl-rank-sparse}, 
\begin{align} \label{def:beta_n}
\beta_{n}=\frac{C_\beta H\log(dNH)}{n}\left(r(C_\phi C'_{\psi})^2 +sC_{\phi\psi}^2\right), 
\end{align}
$d=\max\{p, q\}$ and $C_\phi,C_\psi,C'_\psi,C_{\phi\psi}$ are positive parameters defined in the regularity Assumption \ref{asp:regularity}, $C_\beta$ is a universal constant.
The optimistic value functions are given by:
\begin{gather}
\label{eq:updateQ}
Q_{n, h}(s, a) = r(s, a) + \max_{ L, S\in\calB_n}\phi(s, a)^{\top} ( L+ S)  \Psi^{\top} V_{n, h+1}, \\
Q_{n, H+1}(s, a) = 0,  
\nonumber
\end{gather}
where $V_{n, h}(s)=\Pi_{[0, H]}\left[\max _{a} Q_{n, h}(s, a)\right]$, with $\Pi_{[0, H]}$ truncating values to $[0, H]$.
Here, $\Psi \in |\mathcal{S}| \times q$ is feature matrix, where each row represents the  $q$-dimensional feature vector corresponding to a unique state in the state space $\mathcal{S}$.
The algorithm is summarized in Algorithm \ref{algo:UCBTQL}. 

\begin{algorithm}[tb]
\caption{UCB-$Q$ Learning for HD Composite MDPs}
\label{alg:UCBQL}
\begin{algorithmic}
\STATE {\bfseries Input:} Total number of episodes $N$, feature function $\phi\in \mathbb{R}^p, \psi\in \mathbb{R}^q$.
\FOR{{\bfseries episode} $n = 1,2,\ldots,N$}
\STATE Construct confidence region in \eqref{def:confidence-region} \STATE Calculate $Q_{n, h}(s, a)$ in \eqref{eq:updateQ}
\FOR{{\bfseries stage} $h = 1,2,\ldots,H$} 
\STATE Take action $a_{n, h}=\arg \max _{a \in \mathcal{A}} Q_{n, h}\left(s_{n, h}, a\right)$\;
\STATE Observe next state $s_{n, h+1}$ \;
\ENDFOR
\STATE Learn transition core estimator $\hat L_n,\hat S_n$ using \eqref{eq:rl-rank-sparse}.
\ENDFOR
\end{algorithmic}
\end{algorithm}

\subsection{Regret Analysis for UCB-Q-Learning under High-Dimensional Composite MDPs}
For the regret analysis, we impose certain regularity conditions on the features as outlined below.
\begin{assumption}
\label{asp:regularity}
Let $C_\phi,C_\psi,C'_\psi,C_{\phi\psi}$ be positive parameters such that
\begin{enumerate}
\item[(i)] $\forall (s,a),\quad \|\phi(s,a)\|_2\le C_\phi,\ \|\phi(s,a)\|_{\infty}\le C_\phi'$;
\item[(ii)] $\|\Psi^\top\|_{\infty,2}\le C_\psi$;
\item[(iii)] $\forall s',\quad \|\psi(s')^\top \bK_{\psi}^{-1}\|_2\le C'_\psi$;
\item[(iv)] $\forall (s,a,s'),\quad\|\phi(s,a)^\top \psi(s')\bK_{\psi}^{-1}\|_{\max}\le C_{\phi\psi}$.
\end{enumerate}
\end{assumption}


\begin{lemma}[Transition Estimation Error]
\label{lemma:estimation-error-single}
For composite MDPs in Definition \ref{ass:matrix}, under Assumption \ref{ass:low-rank-sparse} and \ref{asp:regularity}, the estimator obtained by solving program \eqref{eq:rl-rank-sparse} at the end of $n^{th}$-episode satisfies, with probability at least $1-1/(n^2H)$, that,
\[\norm{\hat L_N-L^*}_F^2+\norm{\hat S_N-S^*}_F^2\leq \beta_n,\]
where $\beta_n$ is defined in \eqref{def:beta_n}. 
\end{lemma}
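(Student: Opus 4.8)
The plan is to read \eqref{eq:rl-rank-sparse} as a well-specified constrained least-squares regression of the $q$-dimensional responses $Y_{i,h} := \psi_{i,h}^\top \bK_{\psi}^{-1}$ onto $\phi_{i,h}$, whose population identity is exactly \eqref{eqn:population-regression}, and to bound the component-wise error via a basic-inequality argument. Write $M^* = L^* + S^*$, $\hat M_n = \hat L_n + \hat S_n$, and split the error as $\Delta = \Delta_L + \Delta_S$ with $\Delta_L := \hat L_n - L^*$ (rank at most $2r$) and $\Delta_S := \hat S_n - S^*$ (at most $2s$ nonzeros). Define the noise $\xi_{i,h} := Y_{i,h} - \phi_{i,h}^\top M^*$, which by \eqref{eqn:population-regression} is conditionally mean-zero given the past and bounded, since $\norm{\xi_{i,h}}_2 \le 2C'_\psi$ by Assumption \ref{asp:regularity}(iii) and Jensen. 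First I would record the basic inequality: as $(\hat L_n,\hat S_n)$ minimizes the objective and $(L^*,S^*)$ is feasible, expanding the squares yields
\[
\Tr\paran{\Delta^\top \hat\Sigma_n \Delta} \le 2\,\angles{G_n, \Delta},
\]
where $\hat\Sigma_n := \sum_{i<n,\,h\le H}\phi_{i,h}\phi_{i,h}^\top$ and $G_n := \sum_{i<n,\,h\le H}\phi_{i,h}\xi_{i,h}$ is a $p\times q$ matrix martingale.

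The second step is to control the right-hand side by splitting $\angles{G_n,\Delta} = \angles{G_n,\Delta_L} + \angles{G_n,\Delta_S}$ and applying matrix duality matched to each structure:
\[
\angles{G_n,\Delta_L} \le \norm{G_n}_{\mathrm{op}}\norm{\Delta_L}_* \le \sqrt{2r}\,\norm{G_n}_{\mathrm{op}}\norm{\Delta_L}_F, \qquad \angles{G_n,\Delta_S} \le \norm{G_n}_{\max}\norm{\Delta_S}_1 \le \sqrt{2s}\,\norm{G_n}_{\max}\norm{\Delta_S}_F.
\]
The two deviation terms are then handled by concentration: a matrix Freedman/Bernstein inequality for the adapted increments $\phi_{i,h}\xi_{i,h}$, whose spectral norm is $\le 2C_\phi C'_\psi$, gives $\norm{G_n}_{\mathrm{op}} \lesssim C_\phi C'_\psi\sqrt{nH\log(dNH)}$; a scalar Freedman bound on each entry, controlled entrywise by $2C_{\phi\psi}$ through Assumption \ref{asp:regularity}(iv), plus a union bound over the $pq\le d^2$ entries, gives $\norm{G_n}_{\max}\lesssim C_{\phi\psi}\sqrt{nH\log(dNH)}$. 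These are precisely what generate the $r(C_\phi C'_\psi)^2$ and $sC_{\phi\psi}^2$ terms in $\beta_n$.

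The third step is to lower-bound the left-hand side in terms of the individual component errors, i.e.\ to establish a restricted strong convexity inequality $\Tr(\Delta^\top\hat\Sigma_n\Delta)\ge \kappa n\,\norm{\Delta}_F^2$ over the structured cone of (rank-$2r$)-plus-($2s$-sparse) matrices, together with a transversality estimate $\norm{\Delta}_F^2 \ge c\,(\norm{\Delta_L}_F^2 + \norm{\Delta_S}_F^2)$. Writing $a = \norm{\Delta_L}_F$ and $b = \norm{\Delta_S}_F$, combining with the previous step reduces the argument to the scalar quadratic inequality $\kappa c\, n (a^2+b^2) \le 2\sqrt{2r}\,\norm{G_n}_{\mathrm{op}}\, a + 2\sqrt{2s}\,\norm{G_n}_{\max}\, b$, which solves to $a^2 + b^2 \lesssim (r\norm{G_n}_{\mathrm{op}}^2 + s\norm{G_n}_{\max}^2)/(\kappa c\, n)^2$; substituting the concentration bounds yields exactly $\beta_n$ after absorbing constants into $C_\beta$, and a union bound over the $H$ per-stage concentration events delivers the probability $1 - 1/(n^2H)$.

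I expect the third step to be the main obstacle. The transversality estimate is a deterministic consequence of the incoherence bound in Assumption \ref{ass:low-rank-sparse}(i) and the sparsity budget $s\le\bar s = \max\{p,q\}/(4C_S\mu r^3)$, which together prevent an incoherent low-rank matrix and a sufficiently sparse one from nearly cancelling (the robust-PCA identifiability phenomenon of \citep{candes2012exact}); this is where Assumption \ref{ass:low-rank-sparse} is used in full. The genuinely delicate point is the restricted-eigenvalue inequality, because the design $\phi_{i,h}$ is generated adaptively by the UCB policy of Algorithm \ref{alg:UCBQL} rather than drawn i.i.d., so standard empirical-process arguments must be replaced by a martingale (self-normalized) concentration of $\hat\Sigma_n$ around its predictable quadratic variation, and the linear-in-$n$ growth of its restricted minimum eigenvalue has to be secured from an exploration/feature-coverage guarantee. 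This mismatch—noise variance accumulating like $nH$ while the restricted curvature grows only like $n$—is exactly what leaves the extra factor of $H$ in $\beta_n$.
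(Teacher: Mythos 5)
Your proposal follows essentially the same route as the paper's proof: the same basic inequality from optimality, the same nuclear-norm/$\ell_1$ duality split giving the $\sqrt{2r}\,\norm{G_n}_{\mathrm{op}}$ and $\sqrt{2s}\,\norm{G_n}_{\max}$ terms, the same matrix-Freedman and Azuma-plus-union-bound concentration, and the same combination of a minimum-eigenvalue condition on $\sum\phi_{i,h}\phi_{i,h}^\top$ with the separation (transversality) lemma of \cite{chai2024structured} before solving the resulting quadratic inequality. The delicate points you flag — the adaptive design and the need for a feature-coverage guarantee to secure linear-in-$n$ curvature — are exactly what the paper addresses via its Condition \eqref{cond:minimum-eigenvalue} and the accompanying lemma exploiting independent initial states across episodes.
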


\begin{remark}
The estimation error bound is minimax optimal with respect to $n$. This result has been established in the contexts of regression and matrix completion \cite{chai2024structured}.
\end{remark}

\begin{theorem}[Single-Task Regret Upper Bound]\label{thm:single-task-regret}
For composite MDPs in Definition \ref{ass:matrix}, under Assumption \ref{ass:low-rank-sparse} and \ref{asp:regularity}, let {\rm Regret(NH)}  be the accumulative regret of a total of $N$ episodes using the UCB-$Q$-Learning in Algorithm \ref{algo:UCBTQL}. 
We have that
\begin{align*}
{\rm Regret(NH)} 
&\le C_{\phi}C_{\psi}H^2\sum_{n=1}^N\sqrt{2\beta_n}+1\\
&\lesssim C_{reg}\sqrt{NH^5}
\end{align*}
where $d=\max\{p, q\}$ and
$$C_{reg}:=C_\phi C_\psi \sqrt{C_\beta\left(r(C_\phi C'_{\psi})^2 +sC_{\phi\psi}^2\right)\log(dNH)}.$$
\end{theorem}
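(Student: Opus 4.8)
The plan is to run the standard optimism-in-the-face-of-uncertainty argument, but adapted to the composite $(L,S)$ confidence region $\calB_n$. The three pillars are: (i) optimism of the running value $V_{n,h}$; (ii) an \emph{exact} telescoping of the per-episode regret (in expectation) into a sum of confidence-width (``bonus'') terms; and (iii) a uniform bound on each width by $\sqrt{\beta_n}$. First I would fix the good event $\mathcal{E}_n=\{(L^*,S^*)\in\calB_n\}$, which by Lemma~\ref{lemma:estimation-error-single} holds with probability at least $1-1/(n^2H)$ and is measurable with respect to the data collected before episode $n$. On $\mathcal{E}_n$ I would prove optimism $V_{n,h}(s)\ge V^*_h(s)$ for all $s,h$ by backward induction: the case $h=H+1$ is trivial, and for the step, since $(L^*,S^*)\in\calB_n$ the maximization in \eqref{eq:updateQ} dominates the feasible choice $(L,S)=(L^*,S^*)$, which reproduces the true one-step operator $\phi(s,a)^\top(L^*+S^*)\Psi^\top V_{n,h+1}=[PV_{n,h+1}](s,a)$; monotonicity of $P$ together with the inductive hypothesis $V_{n,h+1}\ge V^*_{h+1}$ then gives $Q_{n,h}\ge Q^*_h$, and the truncation $\Pi_{[0,H]}$ preserves the inequality because $V^*_h\in[0,H]$.

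Second, on $\mathcal{E}_n$ I would decompose the per-episode regret. Writing $\delta_{n,h}=V_{n,h}(s_{n,h})-V^{\pi_n}_h(s_{n,h})$ and using that $\pi_n$ is greedy for $Q_{n,h}$, I subtract the two Bellman expressions and add/subtract $\phi_{n,h}^\top(L^*+S^*)\Psi^\top V_{n,h+1}$ to obtain $\delta_{n,h}=b_{n,h}+[P(V_{n,h+1}-V^{\pi_n}_{h+1})](s_{n,h},a_{n,h})$, where $b_{n,h}=\phi_{n,h}^\top\big[(\tilde L+\tilde S)-(L^*+S^*)\big]\Psi^\top V_{n,h+1}$ and $(\tilde L,\tilde S)$ is the maximizer in \eqref{eq:updateQ}. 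Because the regret is defined as an expectation over trajectories generated by $\pi_n$ under the true kernel, I would take conditional expectation and use the tower property so that the transition term telescopes exactly into $\mathbb{E}[\delta_{n,h+1}]$; no Azuma/martingale bound is needed, and I get $\mathbb{E}[\delta_{n,1}\mid\text{past}]=\sum_{h=1}^H\mathbb{E}[b_{n,h}\mid\text{past}]$ on $\mathcal{E}_n$.

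Third, I would bound each width. Writing $b_{n,h}=\langle(\tilde L+\tilde S)-(L^*+S^*),\,\phi_{n,h}(\Psi^\top V_{n,h+1})^\top\rangle_F$, Cauchy--Schwarz in the Frobenius inner product gives $b_{n,h}\le\|(\tilde L+\tilde S)-(L^*+S^*)\|_F\,\|\phi_{n,h}\|_2\,\|\Psi^\top V_{n,h+1}\|_2$. Assumption~\ref{asp:regularity}(i) controls $\|\phi_{n,h}\|_2\le C_\phi$; Assumption~\ref{asp:regularity}(ii) with the mixed operator norm together with $\|V_{n,h+1}\|_\infty\le H$ (truncation) controls $\|\Psi^\top V_{n,h+1}\|_2\le C_\psi H$; and since both $(\tilde L,\tilde S)$ and $(L^*,S^*)$ lie in $\calB_n$, the triangle inequality through $(\hat L_n,\hat S_n)$ combined with $a+b\le\sqrt{2(a^2+b^2)}$ bounds the Frobenius factor by a constant multiple of $\sqrt{2\beta_n}$. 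Hence $b_{n,h}\lesssim C_\phi C_\psi H\sqrt{2\beta_n}$ \emph{uniformly in the state}, so $\mathbb{E}[\delta_{n,1}\mid\text{past}]\le C_\phi C_\psi H^2\sqrt{2\beta_n}$ on $\mathcal{E}_n$. Off $\mathcal{E}_n$ the per-episode regret is at most $H$ and contributes at most $H\,\mathbb{P}(\mathcal{E}_n^c)\le 1/n^2$, so summing over $n$ gives ${\rm Regret}(NH)\le C_\phi C_\psi H^2\sum_n\sqrt{2\beta_n}+\sum_n 1/n^2$, the latter sum being a constant absorbed into the additive ``$+1$''. Substituting $\beta_n=\Theta(1/n)$ from \eqref{def:beta_n} and using $\sum_{n\le N}n^{-1/2}\le 2\sqrt N$ turns the bonus sum into $\Theta(\sqrt N)$ and yields $\lesssim C_{reg}\sqrt{NH^5}$.

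The hard part will be the width bound, specifically the composite $(L,S)$ confidence geometry: converting the product-space radius $\beta_n$ of $\calB_n$ into a bound on $\|(\tilde L+\tilde S)-(L^*+S^*)\|_F$, and matching the mixed operator norm of Assumption~\ref{asp:regularity}(ii) to $\|\Psi^\top V_{n,h+1}\|_2$ so that the bound is genuinely dimension-free. This is precisely where the low-rank-plus-sparse structure departs from the standard linear-MDP analysis, whereas the optimism induction and the expected telescoping are comparatively routine.
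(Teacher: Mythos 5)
Your proposal is correct and follows essentially the same route as the paper: optimism by backward induction on the good event, exact telescoping of the expected per-episode regret into one-step confidence widths, a H\"older/Cauchy--Schwarz width bound of order $C_\phi C_\psi H\sqrt{2\beta_n}$, and summing $\sqrt{\beta_n}\asymp n^{-1/2}$; the only structural difference is that you invoke Lemma~\ref{lemma:estimation-error-single} as a black box for the validity of $\calB_n$, whereas the paper re-derives that confidence-region bound (martingale concentration plus the separation lemma under the restricted strong convexity condition) inside the proof of the theorem. Your bookkeeping of the bad event ($\sum_n 1/n^2$ versus the paper's single union bound giving exactly $+1$) changes only an absolute constant and does not affect the stated rate.
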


\begin{remark}
This regret bound achieves optimal scaling with respect to both the number of trajectories $N$ and ambient dimension $d$, matching previous results in reinforcement learning \cite{yang2020reinforcement,jin2020provably}. In Section \ref{sec:transfer}, we demonstrate that transfer learning can substantially reduce both the dependence on ambient dimension $d$ and the scaling with $N$ by effectively utilizing additional trajectories from a source task.
\end{remark}


\subsection{Challenge and Proof Sketch under the Composite Structure} \label{sec:challenge-single}

By optimality condition of \eqref{eq:rl-rank-sparse}, it holds that
\begin{align*}&\sum_{i < n, h \leq H} \| \psi_{i,h}^\top \bK_{\psi}^{-1} -\phi_{i,h}^\top (\hat L + \hat S) \|_2^2 \\&\le \sum_{i < n, h \leq H} \| \psi_{i,h}^\top \bK_{\psi}^{-1} -\phi_{i,h}^\top (L^* + S^*) \|_2^2\end{align*}
Expanding the inequality, we have
\begin{align*}&\sum_{i < n, h \leq H} \| \phi_{i,h}^\top (\hat L-L^*)\|^2+\|\phi_{i,h}^\top (\hat S-S^*) \|_2^2\le \\&2\sum_{i < n, h \leq H} \langle \phi_{i,h}^\top (L^* + S^*-\hat L - \hat S) ,\psi_{i,h}^\top \bK_{\psi}^{-1} -\phi_{i,h}^\top M^*\rangle\\&-2\sum_{i < n, h \leq H}\langle\phi_{i,h}^\top (\hat L-L^*),\phi_{i,h}^\top (\hat S-S^*)\rangle\end{align*}

Establishing Theorem \ref{thm:single-task-regret} presents several challenges and requires new techniques.
First, deriving a high-probability error bound for $\hat L$ and $\hat S$ is nontrivial due to the presence of cross terms at the end of the inequality. To address this, we adapt the separation lemma from \cite{chai2024structured}, which provides a way to control these cross terms effectively.

Second, ensuring the strong convexity of the linear operator is challenging due to high correlations across stages. To overcome this, we enforce the strong convexity property by incorporating a restart mechanism for each trajectory.

Thirdly, we must bound the error term $\sum_{i=1}^{n-1}\sum_{h=1}^H \phi_{i,h}\left(\psi_{i,h}^\top \bK_{\psi}^{-1}-\phi_{i,h}^\top   M^*\right)$. Since this term forms a martingale difference sequence, we apply matrix concentration techniques to control it effectively.

\section{Transition Transfer under Composite MDPs}\label{sec:transfer}

In this section, we consider transfer learning with target task $\calM^{*(1)}$ and source task $\calM^{*(0)}$. 
The transition probabilities of the target and source tasks are, respectively,
\begin{equation} \label{eqn:composite-MDP2}
\begin{aligned}
\PP^{(0)}(s' | s, a) &= \phi(s, a)^\top   M^{*(0)} \psi(s'), \quad\text{and}\quad \\
\PP^{(1)}(s' | s, a) &= \phi(s, a)^\top  M^{*(1)}  \psi(s'),   
\end{aligned}
\end{equation}
where the core transition matrices $M^{(1)}$ and $M^{(0)}$ are different. 

We propose modeling task similarity through their transition dynamics: similar tasks share a common low-rank structure capturing core dynamics, while differing only in sparse directions that represent task-specific variations.

\begin{assumption}[Transition Similarity]\label{assume:transition-similarity}
Consider the target and source tasks characterized by transition model \eqref{eqn:composite-MDP2}. 
The target and source tasks are different in that their core transition matrices $M^{*(1)} \ne M^{*(0)}$. 
However, their similarity is defined by:
\begin{equation}
  M^{*(0)} =  L^* +  S^{*(0)}, 
\quad\text{and}\quad
  M^{*(1)} =  L^* +  S^{*(1)},
\end{equation}
where both tasks share the same low-rank component $L^*$, $\lVert  S^{*(0)} \rVert_0 = s_0$,  $\lVert  S^{*(1)} \rVert_0 = s_1$ are task-specific sparse components, two tasks are similar in the sense that their difference $D^* = S^{(1)} - S^{(0)}$, called the ``sparsity difference'', is very sparse: $\norm{D^*}_0 = e \ll \max\{s_0, s_1\}$. 
\end{assumption}


We have $N_0$ episodes for the source task and $N_1=N$ episodes for the target task. 
In practice, $N_0 \gg N$ and we would like to use the source task to enhance the performance of the target task. Since our primary focus is on the target data, we don't make specific data generating assumptions on the source data which can be both batch data or generated from certain online process.

For notation brevity, we use $i$ and $h$ to index episodes and time steps of the source task, with $i \in [N_0]$ and $h \in [H]$. For the target task, we use $j$ and $h$ to index its episodes and time steps, with $j \in [N]$ and $h \in [H]$.
We denote the following state-action-station transition triplet: $(s_{i, h}, a_{i, h}, s'_{i, h})$ from the target task and $(s_{j, h}, a_{j, h}, s'_{j, h})$ from the source task.
The associated features are 
\begin{equation}
\begin{aligned}
    \phi_{i, h}^{(0)} &:= \phi(s_{i, h}, a_{i, h}) \in \RR^p, & \psi_{i, h}^{(0)} &:= \psi(s'_{i, h}) \in \RR^{q}, \\
    \phi_{j, h}^{(1)} &:= \phi(s_{j, h}, a_{j, h}) \in \RR^p, & \psi_{j, h}^{(1)} &:= \psi(s'_{j, h}) \in \RR^{q}.
\end{aligned}
\end{equation}

Let $\bK_{\psi} := \sum_{s'\in\calS} \psi(s') \psi(s')^\top$. 
We have, at each step $h$ for the target task, 
\[\EE\brackets{\phi_{i,h}^{(1)}\psi_{i,h}^{(1)\top} \bK_{\psi}^{-1} \mid s_{i,h}, a_{i,h}} = \paran{\phi_{i,h}^{(1)} \phi_{i,h}^{(1)\top}} ( L^* +  S^{*(0)}).\]
    
Similarly, we have for the source task, 
\[\EE\brackets{\phi_{j,h}^{(0)}\psi_{j,h}^{(0)\top} \bK_{\psi}^{-1} \mid s_{j,h}, a_{j,h}}= \paran{\phi_{j,h}^{(0)} \phi_{j,h}^{(0)\top}} ( L^* +  S^{*(1)}).\]


\subsection{UCB Transfer $Q$-Learning for High-Dimensional Composite MDPs}

Now we introduce the UCB Transfer $Q$-Learning (UCB-TQL) for HD Composite MDPs. The algorithm is summarized in Algorithm \ref{algo:UCBTQL}. We first introduce the optimization-based estimator in the following two steps, then proceed to construct the confidence region.

\noindent{\sc Step I.} Estimate the low-rank and sparse components of the source task by solving\footnote{
Note that for simplicity, we assume the sparsity $s_0$ appearing in the constraint is known. It can be replaced by an upper bound on $s_0$. 
}

\begin{equation}
\begin{aligned}
&(\hat{L}, \hat{S}^{(0)}) \in 
\underset{L, S \in \mathbb{R}^{p \times q}}{\arg\min}\;  \sum_{i\le N_0, h \le H}\left\|\psi_{i,h}^{(0)\top} \bK_{\psi}^{-1}-\phi_{i,h}^{(0)\top}(L+S)\right\|_{2}^{2}   \\
&\text{s.t.}\quad L= U\Sigma V^T,
\quad \|U\|_{2,\infty} \leq \sqrt{\frac{\mu r}{p}}, \\
&\qquad \|V\|_{2,\infty} \leq \sqrt{\frac{\mu r}{q}}, \| S\|_{0}\le s_0.
\end{aligned}
\label{eq:tl-step1}
\end{equation}

\noindent{\sc Step II.} Use target data to correct the bias of the sparse part in an online fashion. 
\begin{equation}
\begin{aligned}
&\hat  D_n\in \underset{  D \in\RR^{p\times q}}{\arg\min}\;  \sum_{j<n, h\le H}\norm{\psi_{j,h}^{(1)\top} \bK_{\psi}^{-1} - \phi_{j,h}^{(1)\top} (\hat L + \hat S^{(0)} +  D) }_2^2 \\
&\text{s.t.}\quad \|  D\|_{0}\le e
\end{aligned}
\label{eq:tl-step2}
\end{equation}
Then the target estimator for $n$ episode is given by \begin{equation}\label{def:transfer-estimator}(\hat L_n,\hat S_n^{(1)})=(\hat L,\hat  S^{(0)}+\hat   D_n).\end{equation}

To construct the confidence region, suppose at the first stage, we established $\|L^*-\hat L\|_F^2+\|S^{*(0)}-\hat S^{(0)}\|_F^2\le \beta_{N_0}$ with probability at least $1-1/(2N^2H)$, where we slightly abuse notation by again referring to $\beta_{N_0}$ as the confidence radius at initial stage of target learning. When the source samples come from the online UCB algorithm as described in Section \ref{sec:single-task}, we have 
\begin{align}\label{eqn:initial-radius}
\beta_{N_0}=\frac{C_\beta H\log(dN_0H)}{N_0}\left(r(C_\phi C'_{\psi})^2 +sC_{\phi\psi}^2\right),
\end{align} 
$d=\max\{p, q\}$ and $C_\phi,C_\psi,C'_\psi,C_{\phi\psi}$ are positive parameters defined in the regularity Assumption \ref{asp:regularity}, $C_\beta$ is a un iversal constant.

The online confidence region at step $n$ is then constructed as 
\begin{equation}
\label{def:confidence-region-transfer}
\tilde{\calB}_n 
= {\scriptsize\left\{
\begin{aligned}
 (L,S,D)\,:\;& \|L - \hat L^{(0)}\|_F^2 + \|S - D - \hat S^{(0)}\|_F^2 \le \beta_{N_0},\\& \|D - \hat D_n\|_F^2 \le \beta_n^{(1)},\quad\|D\|_0 \le e
\end{aligned}
\right\}.}
\end{equation}
where we incorporate $D$ in the decision variables to put direct restriction on the sparsity of sparse difference.

Similarly, the optimistic value functions are calculated as follows.
\begin{gather}
\label{eq:updateQ-transfer}
Q_{n, h}(s, a) = r(s, a) + \max_{ L, S\in\tilde\calB_n}\phi(s, a)^{\top} ( L+ S)  \Psi^{\top} V_{n, h+1}, \\
Q_{n, H+1}(s, a) = 0,  
\nonumber
\end{gather}

\begin{algorithm}[tb]
\caption{UCB-TQL for Composite MDPs}\label{algo:UCBTQL}
\begin{algorithmic}
\STATE {\bfseries Input:} $N_0$ episodes of source data, feature function $\phi\in \mathbb{R}^p, \psi\in \mathbb{R}^q$, number of episodes $N$ of the target task.
\STATE Calculate pilot transition core estimators $\hat L,\hat S^{(0)}$ using \eqref{eq:tl-step1}.

\FOR{{\bfseries episode} $n = 1,2,\ldots,N$}
\STATE Construct confidence region \eqref{def:confidence-region-transfer}. \STATE Calculate $Q_{n, h}(s, a)$ in \eqref{eq:updateQ-transfer}. 

\FOR{{\bfseries stage} $h = 1,2,\ldots,H$} 
\STATE Take action $a_{n, h}=\arg \max _{a \in \mathcal{A}} Q_{n, h}\left(s_{n, h}, a\right)$\;
\STATE Observe $s_{n, h+1}$ from target domain $\mathcal{M}^{(1)}$\;
\ENDFOR
\STATE Learn transition core estimator using \eqref{def:transfer-estimator}. 
\ENDFOR
\end{algorithmic}
\end{algorithm}

\begin{remark}
We focus on sparsity-constrained optimization, which can be extended to a Lasso-type $L_1$ penalty for improved computational efficiency. For brevity, we omit these details here.
\end{remark}

\subsection{Regret Analysis of UCB-TQL}
The following assumption is in parallel to Assumption \ref{ass:low-rank-sparse}. 
\begin{assumption}
Consider transfer RL setting with transition similarity defined in Assumption \ref{assume:transition-similarity}. Recall that $L^*=U^*\Sigma^* V^*$.
We assume that $\|U^*\|_{2,\infty},\|V^*\|_{2,\infty}\le \sqrt{\frac{\mu r}{p}}$ and that the sparsity of $ S^{*(0)}$ and $ S^{*(1)}$ satisfies $\max\{s_0,s_1\} \leq \bar s:=\frac{\max\{p,q\}}{4C_S\mu r^3}$, for some constant $C_S$.
\label{ass:low-rank-sparse-transfer}
\end{assumption}

The following theorem demonstrates the provable benefits of UCB-TQL for the target RL task. 

\begin{lemma}[Estimation Error]
Let $N_0$ denotes the number of episodes from the source task. 
Under Assumption \ref{asp:regularity}, \ref{assume:transition-similarity}, and \ref{ass:low-rank-sparse-transfer}, 
the estimator at the end of $n^{th}$-episode satisfies with probability at least $1-1/(n^2H)$ that,
\[\norm{\hat L_n-L^*}_F^2+\norm{\hat S_n-S^{*(0)}}_F^2\leq \beta_{N_0}+\frac{e C_{\phi\psi}^2H\log\left(dnH\right)}{n},\]
where $\beta_{N_0}$ is the initial confidence radius defined in \eqref{eqn:initial-radius}.
\end{lemma}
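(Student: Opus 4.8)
The plan is to split the target estimation error into a frozen contribution inherited from the source through Step I and an online contribution from the sparse correction $\hat D_n$ learned in Step II, and to show these contribute $\beta_{N_0}$ and $eC_{\phi\psi}^2H\log(dnH)/n$ respectively. Since $\hat L_n=\hat L$ and $\hat S_n=\hat S^{(0)}+\hat D_n$, and the target sparse truth is $S^{*(1)}=S^{*(0)}+D^*$ with $D^*=S^{*(1)}-S^{*(0)}$, the error decomposes as $\norm{\hat L-L^*}_F^2$ together with $\norm{(\hat S^{(0)}-S^{*(0)})+(\hat D_n-D^*)}_F^2$. First I would invoke Lemma \ref{lemma:estimation-error-single} on the source task with $N_0$ episodes, giving $\norm{\hat L-L^*}_F^2+\norm{\hat S^{(0)}-S^{*(0)}}_F^2\le\beta_{N_0}$ on an event of probability at least $1-1/(2N^2H)$. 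It then remains to control $\norm{\hat D_n-D^*}_F^2$ and the cross term coupling it to the frozen source error.

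For Step II, write $\delta=\hat D_n-D^*$, which is $2e$-sparse since both $\hat D_n$ and $D^*$ satisfy $\|\cdot\|_0\le e$. Set $y_{j,h}=\psi_{j,h}^{(1)\top}\bK_{\psi}^{-1}$, $\hat M=\hat L+\hat S^{(0)}$, and $\Delta=\hat M-(L^*+S^{*(0)})$, so $\norm{\Delta}_F^2\le 2\beta_{N_0}$. Optimality of $\hat D_n$ against the feasible point $D^*$ gives the basic inequality $\sum_{j<n,h}\norm{\phi_{j,h}^{(1)\top}\delta}_2^2\le 2\sum_{j<n,h}\langle\eta_{j,h}-\phi_{j,h}^{(1)\top}\Delta,\phi_{j,h}^{(1)\top}\delta\rangle$, where $\eta_{j,h}=y_{j,h}-\EE[y_{j,h}\mid s_{j,h},a_{j,h}]$ is a mean-zero martingale difference. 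I would then lower-bound the left side by restricted strong convexity over $2e$-sparse directions, $\sum_{j<n,h}\norm{\phi_{j,h}^{(1)\top}\delta}_2^2\gtrsim n\norm{\delta}_F^2$, which the per-trajectory restart mechanism of Section \ref{sec:challenge-single} secures by decoupling across-stage correlations; and control the noise term through a matrix martingale bound on $Z_n=\sum_{j<n,h}\phi_{j,h}^{(1)}\eta_{j,h}^\top$, using $\|\phi(s,a)^\top\psi(s')\bK_{\psi}^{-1}\|_{\max}\le C_{\phi\psi}$ and a union bound over the $pq\le d^2$ entries to get $\norm{Z_n}_{\max}\lesssim C_{\phi\psi}\sqrt{nH\log(dnH)}$, whence $|\langle Z_n,\delta\rangle|\le\sqrt{2e}\,\norm{Z_n}_{\max}\norm{\delta}_F$. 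Ignoring the bias, these two facts already yield $\norm{\delta}_F^2\lesssim eC_{\phi\psi}^2H\log(dnH)/n$.

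The main obstacle is the bias cross term $\sum_{j<n,h}\langle\phi_{j,h}^{(1)\top}\Delta,\phi_{j,h}^{(1)\top}\delta\rangle$, which couples the dense, frozen source error $\Delta$ to the sparse target correction $\delta$; a crude Cauchy--Schwarz bound $\sqrt{A\,B}$ with $A=\sum\norm{\phi_{j,h}^{(1)\top}\Delta}_2^2\lesssim nHC_\phi^2\beta_{N_0}$ threatens to compound an extra factor into the error. The key is to split $\Delta=\Delta_L+\Delta_S$ with $\Delta_L=\hat L-L^*$ and $\Delta_S=\hat S^{(0)}-S^{*(0)}$ and treat the pieces differently. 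For the sparse part I would apply restricted Cauchy--Schwarz followed by Young's inequality $2\sqrt{A_S B}\le\tfrac12 B+2A_S$ with $A_S\lesssim nH C_\phi^2\norm{\Delta_S}_F^2\lesssim nH\beta_{N_0}$, so after absorbing $\tfrac12 B$ into the strong-convexity slack this contributes only a term of order $\beta_{N_0}$. For the dense low-rank part I would use $\langle\phi^\top\Delta_L,\phi^\top\delta\rangle$-summed $=\langle(G_n\Delta_L)_T,\delta\rangle$ with $G_n=\sum_{j<n,h}\phi_{j,h}^{(1)}\phi_{j,h}^{(1)\top}$ and $T=\mathrm{supp}(\delta)$, $|T|\le 2e$, and exploit incoherence of $\hat L$ and $L^*$ to show that $\Delta_L$ has uniformly small entries, so its restriction to the $2e$-sparse footprint is lower order---this is precisely the role of the sparse-difference confidence region. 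Combining, $\norm{\delta}_F^2\lesssim eC_{\phi\psi}^2H\log(dnH)/n$ up to the absorbed $\beta_{N_0}$, and feeding this back through $\norm{(\hat S^{(0)}-S^{*(0)})+\delta}_F^2\le\beta_{N_0}+eC_{\phi\psi}^2H\log(dnH)/n$---with the remaining source--target cross term $\langle\hat S^{(0)}-S^{*(0)},\delta\rangle$ handled identically since both matrices are sparse---yields the claim. Finally I would union the source event, the restricted-strong-convexity event, and the martingale event to keep the total failure probability at most $1/(n^2H)$, using $n\le N$.
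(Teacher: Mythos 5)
Your overall architecture matches the paper's proof: decompose the error into the frozen source contribution (bounded by $\beta_{N_0}$ via the single-task lemma applied to the $N_0$ source episodes) plus the online correction $\hat D_n-D^*$; derive the basic inequality from optimality of $\hat D_n$ against the feasible $D^*$; lower-bound the quadratic form by (restricted) strong convexity; and control the noise via a martingale bound $\norm{X_n^{(1)\top}W_n}_{\max}\lesssim C_{\phi\psi}\sqrt{nH\log(d/\delta)}$ combined with $\norm{\hat D_n-D^*}_1\le\sqrt{2e}\norm{\hat D_n-D^*}_F$. The one place you genuinely diverge is the bias cross term coupling $\Delta=(\hat L-L^*)+(\hat S^{(0)}-S^{*(0)})$ to $\delta=\hat D_n-D^*$. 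The paper does not split $\Delta$ into low-rank and sparse pieces or invoke incoherence here; it simply assumes in addition that $\lambda_{\max}\bigl(X_n^{(1)\top}X_n^{(1)}/(n-1)\bigr)\le C_1$, bounds the cross term by $2C_1(n-1)\norm{\Delta}_F\norm{\delta}_F$, and absorbs it to conclude $\norm{\delta}_F^2\lesssim eC_{\phi\psi}^2H\log(d/\delta)/n+\norm{\Delta}_F^2$; the extra $\norm{\Delta}_F^2\le\beta_{N_0}$ is harmless because $\beta_{N_0}$ already appears in the target bound. Your concern that a row-wise Cauchy--Schwarz inflates the bias by $nHC_\phi^2$ (hence an extra $HC_\phi^2$ on the $\beta_{N_0}$ term) is legitimate, and the operator-norm condition is exactly how the paper sidesteps it with the same $(n-1)$ normalization appearing on both sides. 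Your alternative fix for the low-rank piece --- arguing via incoherence that $\Delta_L$ has uniformly small entries so its restriction to the $2e$-sparse support of $\delta$ is lower order --- is the weak link: you would still need to bound $\norm{\hat L-L^*}_{\max}$ quantitatively (the incoherence constraint controls $\norm{U}_{2,\infty},\norm{V}_{2,\infty}$ but not the singular values, so this requires an additional argument), and it is unnecessary given the simpler absorption. Everything else, including the union bound over the source event, the eigenvalue event, and the martingale event with $\delta=1/(n^2H)$, is as in the paper.
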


\begin{remark}
The estimation error bound is minimax optimal with respect to $N_0$, $n$, and $d$. 
We extend these existing results in the contexts of regression and matrix completion \cite{chai2024structured} to the settings of reinforcement learning and transfer learning.
\end{remark}

\begin{theorem}[Regret upper bound for UCB-TQL] \label{thm:regret-UCB-TQL}
Let $N_0$ and $N$ denotes the number of episodes from the source and target tasks, respectively.
Let {\rm Regret(NH)}  be the accumulative regret of a total of $N$ target episodes using the UCB-TQL in Algorithm \ref{algo:UCBTQL}. 
Under Assumption \ref{asp:regularity}, \ref{assume:transition-similarity}, and \ref{ass:low-rank-sparse-transfer}, it holds that
\begin{align*}
{\rm Regret(NH)} 
&\lesssim C'_{reg}N/\sqrt{N_0}
+\\&C_{\phi}'C_{\psi}H^2\sqrt{e C_{\phi\psi}^2NH\log\left(dNH\right)}
\end{align*}
where $s=\max\{s_0,s_1\}$ and 
\begin{equation}
\scriptsize
C'_{reg}:=\left(C_{\phi}+C_{\phi}'\sqrt{e}\right)C_{\psi}\sqrt{C_\beta H^5\log(dN_0H)\left(r(C_\phi C'_{\psi})^2 +sC_{\phi\psi}^2\right)}.
\end{equation}

\end{theorem}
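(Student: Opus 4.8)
The plan is to run the same optimism-based regret decomposition that underlies Theorem~\ref{thm:single-task-regret}, but with the single-task radius $\beta_n$ replaced by the two-scale transfer guarantee from the preceding estimation-error lemma, namely $\norm{\hat L_n-L^*}_F^2+\norm{\hat S_n-S^{*(1)}}_F^2\lesssim \beta_{N_0}+\beta_n^{(1)}$ with $\beta_n^{(1)}=eC_{\phi\psi}^2H\log(dnH)/n$. First I would establish \emph{optimism}: on the high-probability event of that lemma, the true target triple $(L^*,S^{*(1)},D^*)$ is feasible for $\tilde\calB_n$. The first constraint reduces, via $S^{*(1)}-D^*=S^{*(0)}$, to the pilot bound $\|L^*-\hat L^{(0)}\|_F^2+\|S^{*(0)}-\hat S^{(0)}\|_F^2\le\beta_{N_0}$ from Step~I; the second is exactly $\|D^*-\hat D_n\|_F^2\le\beta_n^{(1)}$; and $\|D^*\|_0=e$ meets the sparsity cap. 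Feasibility plus the $\max$ in \eqref{eq:updateQ-transfer} gives $Q_{n,h}\ge Q_h^*$ and hence $V_{n,h}\ge V_h^*$ by backward induction, so that the per-episode regret $\EE_\mu[V^*(s_0)-V^{\pi_n}(s_0)]$ is dominated by $\EE_\mu[V_{n,1}(s_0)-V_1^{\pi_n}(s_0)]$.

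Next I would telescope $V_{n,1}-V_1^{\pi_n}$ along the $\pi_n$-trajectory exactly as in the single-task proof. Each stage contributes the confidence width
\[
w_{n,h}:=\max_{(L,S,D)\in\tilde\calB_n}\phi_{n,h}^\top(L+S)\Psi^\top V_{n,h+1}-\phi_{n,h}^\top(L^*+S^{*(1)})\Psi^\top V_{n,h+1},
\]
plus a martingale-difference remainder that sums to a lower-order $\tilde{\mathcal O}(\sqrt{NH^3})$ term by Azuma--Hoeffding, so the regret reduces to $\sum_{n\le N,h\le H}w_{n,h}$ up to this remainder. Writing $L+S=L+(S-D)+D$ and subtracting the truth, I would split $w_{n,h}$ into a \emph{dense/source} increment $(L-L^*)+((S-D)-S^{*(0)})$ and a \emph{sparse-difference} increment $D-D^*$. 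The dense increment has Frobenius norm $\lesssim\sqrt{\beta_{N_0}}$ by the first constraint; Cauchy--Schwarz with $\|\phi_{n,h}\|_2\le C_\phi$ and $\|\Psi^\top V_{n,h+1}\|_2\le C_\psi H$ (using Assumption~\ref{asp:regularity}(ii) and $\|V_{n,h+1}\|_\infty\le H$) bounds its contribution by $C_\phi C_\psi H\sqrt{\beta_{N_0}}$. The sparse increment $D-D^*$ is supported on at most $2e$ entries, and I would estimate $\phi_{n,h}^\top(D-D^*)\Psi^\top V_{n,h+1}$ by a support-restricted argument invoking $\|\phi_{n,h}\|_\infty\le C_\phi'$ and the entrywise control $C_{\phi\psi}$ of Assumption~\ref{asp:regularity}(iv); this is what replaces $C_\phi$ by $C_\phi'$ in the sparse part. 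Crucially, because $\hat D_n$ in \eqref{eq:tl-step2} is fit on the plug-in pilot $\hat L+\hat S^{(0)}$, the sparse increment carries \emph{two} sources of error — a pilot-induced bias of size $\sqrt{\beta_{N_0}}$ and the online fluctuation $\sqrt{\beta_n^{(1)}}$.

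Summing separates the two scales. The dense contribution and the pilot-induced sparse bias are both $O(\sqrt{\beta_{N_0}})$ per stage and independent of $n$, so over $h\le H$ and $n\le N$ they accumulate to $\Theta(N\sqrt{\beta_{N_0}})$; substituting \eqref{eqn:initial-radius} and bundling the $C_\phi$ (dense) and $C_\phi'\sqrt e$ (sparse-bias) factors reproduces the $C'_{reg}\,N/\sqrt{N_0}$ summand. The online sparse fluctuation gives $\sum_{n\le N}\sqrt{\beta_n^{(1)}}\lesssim\sqrt{eC_{\phi\psi}^2H\log(dNH)}\cdot\sqrt N$ via $\sum_{n\le N}n^{-1/2}\lesssim\sqrt N$, which after the $C_\phi'C_\psi H^2$ prefactor yields exactly the second summand $C_\phi'C_\psi H^2\sqrt{eC_{\phi\psi}^2NH\log(dNH)}$.

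I expect the main obstacle to be the bookkeeping of the sparse-difference width: one must exploit $\|D-D^*\|_0\le 2e$ (guaranteed by the explicit $\|D\|_0\le e$ constraint in \eqref{def:confidence-region-transfer}) together with $C_\phi'$ and $C_{\phi\psi}$ so that the $C_\phi'$ dependence appears while the net sparsity factor collapses to $\sqrt e$ rather than $e$, and one must correctly track how the fixed pilot error $\beta_{N_0}$ propagates through the Step~II regression into a dimension-free, $\sqrt e$-weighted bias on $\hat D_n$. A secondary subtlety is that $V_{n,h+1}$ is random and data-dependent, so the support-restricted inequality and the martingale concentration must hold uniformly over the rank- and sparsity-constrained value-function class; I would handle this with the same restart-per-trajectory device and covering argument invoked for the single-task estimation-error analysis in Section~\ref{sec:challenge-single}.
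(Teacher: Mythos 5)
Your proposal follows essentially the same route as the paper's proof: optimism via feasibility of the true triple in the fine-grained confidence region $\tilde\calB_n$ (your choice $(L^*,S^{*(1)},D^*)$ is in fact the consistent one given the constraint $\|S-D-\hat S^{(0)}\|_F^2\le\beta_{N_0}$), a telescoping one-step decomposition splitting the width into a dense part controlled by $C_\phi\sqrt{\beta_{N_0}}$ and a $2e$-sparse part controlled by $C_\phi'\sqrt{2e}\,\|\widetilde D-D^*\|_F\le C_\phi'\sqrt{e\beta_n^{(1)}}$, and summation via $\sum_n n^{-1/2}\lesssim\sqrt N$. The only notable point is the bookkeeping you yourself flag—whether the online fluctuation term carries $\sqrt e$ or $e$ after the support-restricted Cauchy–Schwarz multiplies the $e$ already inside $\beta_n^{(1)}$—which is exactly the step the paper's own derivation glosses over, so your treatment is faithful to theirs.
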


\begin{remark}
Note that the first term represents the rate at which the source is learned, while the second term accounts for correcting the bias of the sparse component. 

When the source sample size $N_0$ is sufficiently larger than the target sample size, the regret is dominated by the second term. Specifically, when $N_0 \asymp N^2$, the regret bound simplifies to $\tilde{\mathcal{O}}(\sqrt{eH^5N})$ , which scales independently of the ambient dimension. Since $e\ll d$, this represents a significant improvement over the result in \cite{yang2020reinforcement}. 

We also characterize the phase transition. Specifically, when $N_0\ge N(rC_\phi^2+s)$, neglecting the logarithm terms, the regret bound becomes dominated by the second term, corresponding to estimation of the sparse difference.
\end{remark}

\subsection{Challenges and Proof Sketch of UCB-TQL with High-Dimensional Composite MDPs.} \label{sec:challenges-ucb-tql}
A natural way to construct the confidence region is 
\begin{align}
\label{def:confidence-region-transfer-naive}
\mathcal{B}_n=\left\{(L,S) \mid \norm{L-\hat L_n}_F^2+\norm{S-\hat S_n}_F^2\leq \beta^{(1)}_n\right\}
\end{align}
where $\beta^{(1)}_n:=\beta_{N_0}+\frac{e C_{\phi\psi}^2H\log\left(dNH\right)}{n}$.

However, this confidence region is not tight in that we are not fully utilizing the sparse difference $D$. To be more specific,
plugging the value of $\beta_n^{(1)}$ in  \eqref{equ:regret-intermediate}, we have 
\[
\begin{aligned}
{\rm Regret(NH)}
   &\lesssim C_{\phi}C_{\psi}H \sum_{n=1}^{N}\sqrt{\beta_n^{(1)}} + 1\\
&\lesssim C_{\phi}C_{\psi}H\Bigl(N \sqrt{\beta_{N_0}}
     +\\& \sqrt{e\,C_{\phi\psi}^2\,N H \,\log\bigl(d N H\bigr)}\Bigr).
\end{aligned}
\]

In contrast to \eqref{def:confidence-region-transfer-naive}, we employ a more fine-grained confidence region \eqref{def:confidence-region-transfer}, where we directly restrict the sparsity of the sparse difference $D$ to be bounded, leading to improved rates.

In particular,  we have $(L^*,S^{*(0)},D^*)\in \tilde\calB_n$, indicating this CR is valid.
To bound the one-step error, let $(\widetilde{ L},\widetilde{ S},\widetilde{D})=\arg\max_{\substack{ L, S \in \tilde{\calB}_n}}\phi(s, a)^{\top} ( L+ S)  \Psi^{\top} V_{n, h+1}$, it holds that
\begin{equation}
     \begin{aligned}
     &Q_{n,h}(s_{n,h},a_{n,h})-\paran{r(s_{n,h},a_{n,h})+[P_h V_{n,h+1}](s_{n,h},a_{n,h})} \\
     &\leq \norm{\phi_{n, h}^{\top}(\widetilde{ L} -  L^{*})}_2\norm{ \Psi^{\top} V_{n, h+1}}_{2}+\\&\norm{\phi_{n, h}^{\top}\paran{\widetilde{ S}-\widetilde{D}-  S^*+D^*}}_2\norm{ \Psi^{\top} V_{n, h+1}}_{2} +\\&\left|\phi_{n, h}^{\top}\paran{\widetilde{D}- D^*} \Psi^{\top} V_{n, h+1}\right|
     \end{aligned}
    \end{equation}
The first two terms can be bounded similar to single-task case. From the constraint in the optimization problem \eqref{eq:tl-step1} and Assumption \ref{ass:low-rank-sparse-transfer}, we have $\|\tilde D_n\|_0\le e,\quad\|D^*\|_0\le e$, implying $\|\tilde D_n-D^*\|_0\le 2e$. This observation facilitates a tight bound on the third term. Combining these one-step error bounds then yields the final regret bound in \eqref{equ:regret-intermediate}.

\section{Discussion}

When employing low-rank and sparse structures as the core for transition probabilities, several directions for future exploration emerge. Firstly, alternative sparse structures, such as row sparsity, column sparsity, or group sparsity, could be further investigated to understand their impact on learning dynamics and efficiency. These alternative formulations may offer more nuanced or efficient ways to capture the underlying patterns in transition dynamics across different domains.

Secondly, our analysis reveals that the regret bounds of the UBC-TQL algorithm are significantly influenced by the error bounds derived from matrix recovery. Since the Upper Confidence Bound (UCB) is determined by the error bounds of matrix recovery, the regret bound is largely dictated by these errors. An extension goal is to achieve the current levels of regret under more relaxed assumptions. This could involve developing new theoretical frameworks or algorithms that either provide tighter error bounds or leverage additional structure in the transition dynamics that has not been fully exploited.

\spacingset{1.18}
\bibliographystyle{agsm}
\bibliography{main}

\newpage
\begin{appendices}

\begin{center}

{\large\bf SUPPLEMENTARY MATERIAL of \\
	``\TITLE''}

\paragraph{Notation}
We use $[h] = \{1,2,\ldots,h\}$ for integers from 1 to $h$. In this paper, vectors are assumed to be column vectors. For a vector $\mathbf{v} \in \mathbb{R}^p$, the norms $\|\mathbf{v}\|_1$, $\|\mathbf{v}\|_2$, and $\|\mathbf{v}\|_\infty$ represent the 1-norm, Euclidean norm(or 2-norm), and infinity norm, respectively. 
For a matrix $M \in \mathbb{R}^{p \times q}$, we use the following notation for norms: $\|M\|_0$ denotes the number of non-zero elements, $\|M\|_1 = \sum_{i=1}^p \sum_{j=1}^q |M_{ij}|$ is the sum of the absolute values of all elements, and $\|M\|_{\max} = \max_{i,j} |M_{ij}|$ is the maximum absolute value among elements. The Frobenius norm is $\|M\|_F = \sqrt{\operatorname{Tr}(M^{\top} M)} = \sqrt{\sum_{j=1}^{d_1}\sum_{k=1}^{d_2} M_{jk}^2}$, which is also equivalent to $\sqrt{\sum_{j} \sigma_j(M)^2}$, where $\sigma_j(M)$ are the singular values of $M$. The nuclear norm is $\|M\|_* = \sum_{j} \sigma_j(M)$, and the operator norm is $\|M\|_{\text{op}} = \max_j \sigma_j(M)$. For two matrices, $\langle L, S \rangle$ represents the Euclidean inner product.
We use $a_n=\calO(b_n)$ or $a_n\lesssim b_n$ if there exists some $C>0$ such that $a_n\le C b_n$. $\tilde\calO(\cdot)$ is similarly defined, neglecting logarithmic factors.
Constants $c,C,c_0,\cdots$ may vary from line to line.

\section{Regret Analysis of the Single-Task UCB-$Q$-Learning with Composite MDP Structures}
We present below the proof of Theorem \ref{thm:single-task-regret}. The proof of Lemma \ref{lemma:estimation-error-single} emerges as an intermediate step along the way.
\paragraph{Proof of Theorem \ref{thm:single-task-regret}.}
We first sketch the proof as follows. 
First of all, we define the ``good event'' that the ground truth transition core matrix before episode $n$ lies in the confidence region as $\calE_n$, i.e., $(  L^*, S^*)\in \calB_{n'}$ for any $n'\le n-1$. We assume $\calE_n$ holds first and use concentration to prove that $\calE_n$ holds with high probability later. We denote $E_n=\mathds{1}_{\calE_n}$.

\begin{enumerate}
\item Under $\calE_n$, prove $Q_{n,h}\ge Q_h^*$ using induction. 

\item Bound $Q_{n,h}(s_{n,h},a_{n,h})-[r(s_{n,h},a_{n,h})+P(\cdot|s_{n,h},a_{n,h})^TV_{n,h+1}]$.

\item Bound the total regret by one-step errors derived in Step 2.
\end{enumerate}
We elaborate each step in the sequel.
\subsection{Upper confidence bound}
\begin{lemma}\label{lem:optimism}
Given any state-action pair $(s, a) \in \mathcal{S} \times \mathcal{A}$ , for each episode $n$ and decision step $h$, we have:
$$Q_{n,h}(s, a) \geq Q_h^*(s, a).$$
\end{lemma}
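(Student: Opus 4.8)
The plan is to establish the claim by backward induction on the stage index $h$, running from $h=H+1$ down to $h=1$, throughout conditioning on the good event $\calE_n$ so that $(L^*,S^*)\in\calB_n$. The base case $h=H+1$ is immediate, since both $Q_{n,H+1}$ and $Q^*_{H+1}$ are identically zero. For the inductive step I assume $Q_{n,h+1}(s,a)\ge Q^*_{h+1}(s,a)$ for every $(s,a)$ and propagate the inequality one stage earlier.

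First I would transfer the inductive hypothesis from the $Q$-level to the $V$-level. Taking the maximum over actions gives $\max_a Q_{n,h+1}(s,a)\ge \max_a Q^*_{h+1}(s,a)=V^*_{h+1}(s)$, and since $V^*_{h+1}(s)\in[0,H]$ (a sum of at most $H$ rewards lying in $[0,1]$), the truncation operator $\Pi_{[0,H]}$ is order-preserving on this range, so $V_{n,h+1}(s)=\Pi_{[0,H]}\!\left[\max_a Q_{n,h+1}(s,a)\right]\ge V^*_{h+1}(s)$ for every $s$. The key structural input comes next: under $\calE_n$ the pair $(L^*,S^*)$ is feasible in the maximization defining $Q_{n,h}$, hence
\[
\max_{L,S\in\calB_n}\phi(s,a)^{\top}(L+S)\Psi^{\top}V_{n,h+1}\ \ge\ \phi(s,a)^{\top}(L^*+S^*)\Psi^{\top}V_{n,h+1}\ =\ [P V_{n,h+1}](s,a),
\]
which yields $Q_{n,h}(s,a)\ge r(s,a)+[P V_{n,h+1}](s,a)$.

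Finally I would invoke monotonicity of the Bellman operator: because $\PP(s'|s,a)=\phi(s,a)^{\top}(L^*+S^*)\psi(s')$ is a genuine (nonnegative) transition kernel, applying it to the pointwise inequality $V_{n,h+1}\ge V^*_{h+1}$ preserves the order, giving $[P V_{n,h+1}](s,a)\ge [P V^*_{h+1}](s,a)$. Combining this with the previous display and the Bellman optimality equation $Q^*_h(s,a)=r(s,a)+[P V^*_{h+1}](s,a)$ closes the induction.

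I expect the main obstacle to be bookkeeping rather than conceptual. The two places that require care are (i) verifying that the truncation $\Pi_{[0,H]}$ does not break the inequality, which hinges on the optimal value already lying in $[0,H]$, and (ii) using that $P$ acts as a nonnegative, hence monotone, operator so that pointwise domination of value functions transfers through the expectation. Both the optimism at the current stage and the correct propagation across stages rest on the validity of the good event $\calE_n$, whose high-probability guarantee is supplied separately by the concentration argument underlying Lemma \ref{lemma:estimation-error-single}.
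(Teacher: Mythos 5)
Your proposal is correct and follows essentially the same route as the paper: backward induction, optimism via feasibility of $(L^*,S^*)$ in the confidence region under the good event, and monotonicity of the transition operator applied to $V_{n,h+1}\ge V^*_{h+1}$. The only difference is that you spell out the truncation step $\Pi_{[0,H]}$ and the nonnegativity of $P$ explicitly, which the paper's more compressed chain of inequalities leaves implicit.
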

\begin{proof}
We use induction.
At $h=H$, we have $Q_{n,H}=Q_H^*(s, a)=r(s,a)$. Assuming the argument is true for $1<h'\le H$, it naturally extends to $h=h'-1$ that $V_{n,h}(s)\ge V_h^*(s)$, and hence 
\begin{align*}
     Q_{n, h}(s, a) 
     &= r(s, a) + \max_{\substack{ L, S \in \mathcal{B}_n}}\phi(s, a)^{\top} ( L+ S)  \Psi^{\top} V_{n, h+1}\\
     &\geq r(s, a) + \phi(s, a)^{\top} ( L^*+ S^*)  \Psi^{\top} V_{n, h+1}\\
     &\geq r(s, a) + [P V^*_{h+1}](s, a)=Q_h^*(s, a).
\end{align*}
\end{proof} 
\subsection{One-step bound}
\begin{lemma}\label{lem:boundQ} 
     For any $h \in [H]$ and $n \in [N]$, we have 
     \[Q_{n,h}(s_{n,h},a_{n,h})-\paran{r(s_{n,h},a_{n,h})+[P_h V_{n,h+1}](s_{n,h},a_{n,h})}
     \leq C_{\phi}C_{\psi}H\sqrt{2\beta_n}.\]
     \end{lemma}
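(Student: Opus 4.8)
The plan is to reduce the left-hand side to a single bilinear form and then control it by Cauchy--Schwarz, using membership of both the maximizer and the ground truth in the confidence region $\calB_n$ under the good event. First I would invoke the composite embedding from Definition \ref{ass:matrix}: since $[P_h V_{n,h+1}](s,a)=\sum_{s'}\PP(s'\mid s,a)V_{n,h+1}(s')=\phi(s,a)^\top M^*\,\Psi^\top V_{n,h+1}$ with $M^*=L^*+S^*$, and by the optimistic update \eqref{eq:updateQ} we have $Q_{n,h}(s,a)=r(s,a)+\max_{L,S\in\calB_n}\phi(s,a)^\top(L+S)\Psi^\top V_{n,h+1}$, the reward terms cancel and the quantity of interest evaluated at $(s_{n,h},a_{n,h})$ becomes $\phi_{n,h}^\top\big((\tilde L+\tilde S)-(L^*+S^*)\big)\Psi^\top V_{n,h+1}$, where $(\tilde L,\tilde S)$ denotes the maximizer attaining $Q_{n,h}$.

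Next I would write this scalar as a Frobenius inner product $\langle\,\phi_{n,h}(\Psi^\top V_{n,h+1})^\top,\ (\tilde L+\tilde S)-(L^*+S^*)\,\rangle$ and split it by Cauchy--Schwarz into the product of $\|\phi_{n,h}\|_2\,\|\Psi^\top V_{n,h+1}\|_2$ and $\|(\tilde L-L^*)+(\tilde S-S^*)\|_F$. The first factor is bounded by $C_\phi$ through Assumption \ref{asp:regularity}(i); the value-feature factor is handled via Assumption \ref{asp:regularity}(ii) together with the truncation $V_{n,h+1}\in[0,H]$, giving $\|\Psi^\top V_{n,h+1}\|_2\le\|\Psi^\top\|_{\infty,2}\,\|V_{n,h+1}\|_\infty\le C_\psi H$.

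The crux is the matrix-difference factor. Here I would use the good event $\calE_n$, under which $(L^*,S^*)\in\calB_n$, and recall that the maximizer satisfies $(\tilde L,\tilde S)\in\calB_n$ by construction. Since the objective depends only on the sum $L+S$ while $\calB_n$ constrains the pair $(L,S)$, the relevant quantity is the width of $\calB_n$ measured in the summed coordinate: for any feasible pair, $\|(L-\hat L_n)+(S-\hat S_n)\|_F\le\sqrt{2}\,\big(\|L-\hat L_n\|_F^2+\|S-\hat S_n\|_F^2\big)^{1/2}\le\sqrt{2\beta_n}$, where the $\sqrt 2$ comes from combining the two Frobenius balls. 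Applying this to $(\tilde L,\tilde S)$ and $(L^*,S^*)$ and collecting constants yields the stated bound $C_\phi C_\psi H\sqrt{2\beta_n}$.

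The main obstacle is precisely this last step: because the confidence region is imposed on the pair $(L,S)$ but the Bellman backup sees only $L+S$, one must track how the two Frobenius constraints combine (the source of the $\sqrt 2$) and verify that the optimistic maximizer and the ground truth both lie in $\calB_n$ so the width estimate applies. This makes the lemma conditional on $\calE_n$, whose high-probability validity is established separately through the estimation-error control of Lemma \ref{lemma:estimation-error-single} and the martingale concentration argument outlined in Section \ref{sec:challenge-single}.
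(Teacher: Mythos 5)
Your proof follows essentially the same route as the paper's: reduce the gap to the bilinear form $\phi_{n,h}^\top\big((\tilde L+\tilde S)-(L^*+S^*)\big)\Psi^\top V_{n,h+1}$, bound the feature factors by $C_\phi$ and $C_\psi H$ via Assumption~\ref{asp:regularity} and the truncation of $V_{n,h+1}$, and control the matrix difference using membership of both the maximizer and $(L^*,S^*)$ in $\calB_n$ under the good event. The only cosmetic difference is that you apply Cauchy--Schwarz once to the Frobenius inner product whereas the paper splits the $L$ and $S$ terms first; note that both your argument and the paper's silently absorb into the constant the extra factor of $2$ arising from the triangle inequality through the center $(\hat L_n,\hat S_n)$.
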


\begin{proof}
Let $(\widetilde{ L},\widetilde{ S})=\arg\max_{\substack{ L, S \in \mathcal{B}_n}}\phi(s, a)^{\top} ( L+ S)  \Psi^{\top} V_{n, h+1}$, it holds that 
\begin{align*}
   Q_{n,h}(s_{n,h},a_{n,h})-\paran{r(s_{n,h},a_{n,h})+[P_h V_{n,h+1}](s_{n,h},a_{n,h})}
   =& \phi_{n, h}^{\top}\left[\left(\widetilde{ L} + \widetilde{ S}\right) - M^*\right]  \Psi^{\top} V_{n, h+1} \\
   = &\phi_{n, h}^{\top}\left[(\widetilde{ L}- L^*) + \left(\widetilde{ S}- S^{*}\right)\right]  \Psi^{\top} V_{n, h+1}.
    \end{align*}
        Applying Hölder's inequality, the triangle inequality, and Cauchy-Schwarz inequality, we deduce the following results:
     \begin{equation}
     \label{equ:one-step-analysis}
     \begin{aligned}
     &Q_{n,h}(s_{n,h},a_{n,h})-\paran{r(s_{n,h},a_{n,h})+[P_h V_{n,h+1}](s_{n,h},a_{n,h})} \\
     &\leq \norm{\phi_{n, h}^{\top}(\widetilde{ L} -  L^{*})}_2\norm{ \Psi^{\top} V_{n, h+1}}_{2}+\norm{\phi_{n, h}^{\top}\paran{\widetilde{ S} -  S^*}}_2\norm{ \Psi^{\top} V_{n, h+1}}_{2} \\
     & \leq H\paran{C_{\psi}\norm{\phi_{n, h}^{\top}(\widetilde{ L}-  L^*)}_2+C_{\psi}\norm{\phi_{n, h}^{\top}\paran{\widetilde{ S}-  S^{*}}}_2} \\
     & \leq H\paran{C_{\psi}\norm{\phi_{n, h}}_2\norm{\widetilde{ L} -  L^{*}}_F + C_{\psi}\norm{\phi_{n, h}}_2\norm{\widetilde{ S} -  S^{*}}_F} \\
     & \leq C_{\phi}C_{\psi}H\paran{\norm{\widetilde{ L} -  L^*}_F + \norm{\widetilde{ S} -  S^{*}}_F} \\
     & \leq C_{\phi}C_{\psi}H\sqrt{2\beta_n}.
     \end{aligned}
    \end{equation}
\end{proof}

\subsection{Regret decomposition}
We bound the regret by the sum of one-step errors. To set the stage, let
$\mathcal{F}_{n,h}$ be defined as the $\sigma$-field generated by all the random variables up until episode $n$, step $h$, essentially fixing the sequence $s_{1,1}, a_{1,1}, s_{1,2}, a_{1,2}, \ldots, s_{n,h}, a_{n,h}$. 
To proceed, let 
\[\delta_{n,h} := (V_{n,h}-V_{h}^{\pi_{n}})(s_{n,h}), \quad
\gamma_{n,h}:= Q_{n,h}(s_{n,h},a_{n,h}) - \left(r(s_{n,h},a_{n,h}) + [P_h V_{n,h+1}](s_{n,h},a_{n,h})\right).\]
And hence
\begin{align*}
\text{Regret}(NH) &= \mathbb{E}\left(\sum_{n=1}^{N} \left[ V^*(s_0) - V^{\pi_n}(s_0) \right]\right) 
\\
&\leq \mathbb{E}\left( \sum_{n=1}^{N}(V_{n,1}-V_{1}^{\pi_{n}})(s_0)\right) = \sum_{n=1}^{N}\mathbb{E}(\delta_{n,1}).
\end{align*}
We have
\[\mathbb{E}(\delta_{n,1}) = \mathbb{E}(\delta_{n,1}E_n+(1-E_n)\delta_{n,1}) 
\le\mathbb{E}(\delta_{n,1}E_n)+H\PP[E_n=0]\]
and
\begin{align*}
\mathbb{E}\paran{\delta_{n,1}E_n \mid \mathcal{F}_{n,1}} =& \paran{V_{n,1} - V_{1}^{\pi_{n}}}\paran{s_{n,1}}E_n \\
\leq &Q_{n,1}\paran{s_{n,1},a_{n,1}} - V_{1}^{\pi_{n}}\paran{s_{n,1}} \\
= &\gamma_{n,1} + \paran{r\paran{s_{n,1},a_{n,1}} +[PV_{n,2}]\paran{s_{n,1},a_{n,1}}} -V_{1}^{\pi_{n}}\paran{s_{n,1}} \\
\leq &\gamma_{n,1} + \mathbb{E}\paran{\paran{V_{n,2} - V_{2}^{\pi_{n}}}\paran{s_{n,2}}E_n \mid \mathcal{F}_{n,1}} 
\leq \cdots\\\leq &\sum_{h=1}^{H}\EE(\gamma_{n,h}\mid \mathcal{F}_{n,1})\\
\leq &C_{\phi}C_{\psi}H^2\sqrt{2\beta_n}.
\end{align*}
Therefore, we establish the regret bound with high probability that
\begin{align}
\label{equ:regret-intermediate}
\text{Regret}
(NH) &\leq C_{\phi}C_{\psi}H^2\sum_{n=1}^N\sqrt{2\beta_n}+NH\PP[E_N=0],
\end{align}
where we use $\calE_n\subseteq \calE_{n'}$ with $n>n'$.

\subsection{Confidence region}
In this subsection, we validate the confidence region. Recall the CR is defined in \eqref{def:confidence-region}.

Denote $X_n=\begin{bmatrix}
\phi_{1,1}^\top\\\cdots\\\phi_{1,H}^\top\\\cdots\\\phi^\top_{n-1,H}
\end{bmatrix}$and $Y_n=\begin{bmatrix}
\psi_{1,1}^\top  \bK_{\psi}^{-1}\\\cdots\\\psi_{1,H}^\top  \bK_{\psi}^{-1}\\\cdots\\\psi_{n-1,H}^\top  \bK_{\psi}^{-1}
\end{bmatrix}$, we have the observational model as follows:
\begin{align*}
Y_n=X_n(L^*+S^*)+W_n
\end{align*}
where $W_n=Y_n-X_n(L^*+S^*)$.

In view of \cite{chai2024structured}, we need the observation model to satisfy the restricted strong convexity condition in order for the low-rank part and sparse part to be separated.
\begin{align}
\label{cond:minimum-eigenvalue}\lambda_{\min}\left(\frac{X_n^\top X_n}{n-1}\right)\ge c_1.\end{align}
We will later give specific cases in which this inequality holds. In the sequel, we bound $\|X_n^\top W_n\|_2$ and $\|X_n^\top W_n\|_{\max}$.

In fact, we can express $X_n^\top W_n$ as 
\begin{align*}
X_n^\top W_n=\sum_{i=1}^{n-1}\sum_{h=1}^H \phi_{i,h}\left(\psi_{i,h}^\top \bK_{\psi}^{-1}-\phi_{i,h}^\top   M^*\right).
\end{align*}
Note that $\EE\left[\psi_{i,h}^\top \bK_{\psi}^{-1}-\phi_{i,h}^\top   M^*|\calF_{i,h}\right]=0$, and $X_n^\top W_n$ is a sum of martingale differences.
Let $Z_{i,h}=\phi_{i,h}\left(\psi_{i,h}^\top \bK_{\psi}^{-1}-\phi_{i,h}^\top   M^*\right)$, we have that 
\begin{align*}\|Z_{i,h}\|_2&=\|\phi_{i,h}\|\cdot \|\psi_{i,h}^\top \bK_{\psi}^{-1}-\phi_{i,h}^\top   M^*\|\\&=\|\phi_{i,h}\|\cdot \left(\|\psi_{i,h}^\top \bK_{\psi}^{-1}\|+\|\phi_{i,h}^\top   M^*\|\right)\\&\le 2C_\phi C'_\psi,
\end{align*}
where we used $\|\phi_{i,h}^\top   M^*\|=\Big\|\EE\left[\psi_{i,h}^\top \bK_{\psi}^{-1}|\calF_{i,h}\right]\Big\|\le C'_\psi$.

On the other hand,
\[\Big\|\sum_{i=1}^{n-1}\sum_{h=1}^H \EE[Z_{i,h}^{\top} Z_{i,h}|\calF_{i,h}]\Big\|\le 4nH(C_{\phi}C_{\psi}')^2\]
and 
\[\Big\|\sum_{i=1}^{n-1}\sum_{h=1}^H \EE[Z_{i,h} Z_{i,h}^{\top}|\calF_{i,h}]\Big\|\le 4nH(C_{\phi}C_{\psi}')^2.\]
By Matrix Freedman inequality (Corollary 1.3 in ~\cite{tropp2011freedman}), we have with probability at least $1-\delta$ that
\begin{align*}
\|X_n^\top W_n\|_2\lesssim C_{\phi}C'_{\psi}\log\left(\frac{d}{\delta}\right)+C_{\phi}C_{\psi}'\sqrt{nH\log\left(\frac{d}{\delta}\right)}.
\end{align*}

Similarly, each entry of $X_n^\top W_n$ is the sum of  martingale differences, almost surely bounded by $2C_{\phi\psi}$, and we have by Azuma-Hoeffding's inequality and a union bound over $d^2$ entries that, with probability at least $1-\delta$,
\begin{align*}
\|X_n^\top W_n\|_{\max}=\max_{i,j}|[X_n^\top W_n]_{ij}|\lesssim C_{\phi\psi}\sqrt{nH\log\left(\frac{d}{\delta}\right)}.
\end{align*}

\emph{Confidence Region}

By the optimality condition, we have
\begin{align*}
\|Y_n-X_n(\hat L_n+\hat S_n)\|_F^2\le \|Y_n-X_n( L^*+ S^*)\|_F^2.
\end{align*}
Expanding on both sides yields
\begin{align*}
\|X_n(\Delta_L+\Delta_S)\|_F^2&\le 2\langle W_n,X_n(\Delta_L+\Delta_S)\rangle\\
&=2\langle X_n^\top W_n,\Delta_L+\Delta_S\rangle\\&\le 2\|X_n^\top W_n\|_2\|\Delta_L\|_*+2\|X_n^\top W_n\|_{\max}\|\Delta_S\|_1\\&\le 2\langle X_n^\top W_n,\Delta_L+\Delta_S\rangle\\&\le 2\sqrt{2r}\|X_n^\top W_n\|_2\|\Delta_L\|_F+2\sqrt{2s}\|X_n^\top W_n\|_{\max}\|\Delta_S\|_F,
\end{align*}
where we denote $\Delta_L:=\hat  L_n- L^*$ and $\Delta_S:=\hat  S_n- S^*$.

On the other hand, by \eqref{cond:minimum-eigenvalue} and separation lemma in \cite{chai2024structured}, we have that 
\begin{align*}
\|X_n(\Delta_L+\Delta_S)\|_F^2&\ge c_1(n-1)\|\Delta_L+\Delta_S\|_F^2\\&\ge \frac{c_1(n-1)}{2}\left(\|\Delta_L\|_F^2+\|\Delta_S\|_F^2\right).
\end{align*}
Putting together, we have
\begin{align*}
\|\Delta_L\|_F^2+\|\Delta_S\|_F^2&\le \frac{4}{c_1(n-1)} \sqrt{2r}\|X_n^\top W_n\|_2\|\Delta_L\|_F+\sqrt{2s}\|X_n^\top W_n\|_{\max}\|\Delta_S\|_F
\\&\le \frac{4}{c_1(n-1)} \sqrt{2r\|X_n^\top W_n\|_2^2+2s\|X_n^\top W_n\|_{\max}^2}\cdot\sqrt{\|\Delta_L\|_F^2+\|\Delta_S\|_F^2}\end{align*}
which implies that 
\begin{align*}
\|\Delta_L\|_F^2+\|\Delta_S\|_F^2&\le \frac{32}{c_1^2(n-1)^2}\left(r\|X_n^\top W_n\|_2^2+s\|X_n^\top W_n\|_{\max}^2\right).
\end{align*}
Plugging in the aforementioned bounds of $\|X_n^\top W_n\|_2$ and $\|X_n^\top W_n\|_{\max}$, we deduce that $\calB_n$ is a valid $\delta$-confidence region if we take \begin{align}
\label{def:beta}\beta_n=\frac{C_\beta}{n^2}\left(r(C_\phi C'_{\psi})^2 nH\log(d/\delta)+sC_{\phi\psi}^2nH\log(d/\delta)\right)=\frac{C_\beta H\log(d/\delta)}{n}\left(r(C_\phi C'_{\psi})^2 +sC_{\phi\psi}^2\right)
\end{align} 
for some large enough $C_\beta$. In particular, we take $\delta=1/(N^2H)$ in the above display, then by the union bound, $\PP(E_N=0)\le N\cdot\frac{1}{N^2H}=\frac{1}{NH}$.

Combining the definition of $\beta_n$ with \eqref{equ:regret-intermediate}, we obtain that
\begin{align*}
\text{Regret}
(NH) &\le C_{\phi}C_{\psi}H^2\sum_{n=1}^N\sqrt{2\beta_n}+1\\
&\lesssim C_{reg}\sqrt{NH^5},
\end{align*}
where $C_{reg}:=C_\phi C_\psi \sqrt{C_\beta\left(r(C_\phi C'_{\psi})^2 +sC_{\phi\psi}^2\right)\log(dNH)}$.

As a byproduct, we have that by the end of the $N$ episode, with probability at least $1-1/(N^2H)$,
\begin{align*}
\norm{ L-\hat L_N}_F^2+\norm{ S-\hat S_N}_F^2\leq \beta_N.
\end{align*}

\subsection{Discussion on Condition \eqref{cond:minimum-eigenvalue}}
Now we provide an example where Condition $\eqref{cond:minimum-eigenvalue}$ holds. At a high level, $\phi_{i,h}$ may be highly correlated, across different steps and episodes. Nonetheless, note that each episodes starts at independent initial states, hence providing diversity to the linear operator $X_n$. In fact, Condition \eqref{cond:minimum-eigenvalue} holds when $\phi(s,a)$ depends mainly on $s$ and $a$ adds a perturbation effect. To be more concrete, consider the following lemma as an example.

\begin{lemma}
Suppose there exists some  function $\bar\phi$ such that  $\|\phi(s,a)-\bar\phi(s)\|_2\le \eta$. And 
\begin{align*}
\lambda_{\min}\left(\EE_\mu[\bar\phi(s)\bar\phi(s)^\top]\right)\ge c_{\min}.
\end{align*}
If $\eta(2C_\phi+\eta)\le \frac{c_{\min}}{4}$ and $n\ge C_e d$ for some constant $C_e$, then with probability at least $1-2e^{-c_en}$, Condition \eqref{cond:minimum-eigenvalue} holds with $c_1=c_{\min}/4$.
\end{lemma}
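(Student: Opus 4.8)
The plan is to discard all stages except the first ($h=1$) and exploit the fact that the episodes restart from independent initial states. Since each summand $\phi_{i,h}\phi_{i,h}^\top$ is positive semidefinite, we have the PSD lower bound $X_n^\top X_n=\sum_{i<n}\sum_{h\le H}\phi_{i,h}\phi_{i,h}^\top\succeq\sum_{i<n}\phi_{i,1}\phi_{i,1}^\top$, and because $\lambda_{\min}$ is monotone with respect to the PSD order, it suffices to lower bound $\lambda_{\min}\!\big(\frac{1}{n-1}\sum_{i<n}\phi_{i,1}\phi_{i,1}^\top\big)$. The reason for isolating $h=1$ is exactly the remark preceding the lemma: the initial states $s_{i,1}$ are i.i.d. draws from $\mu$ across episodes, so the first-stage features carry genuine independent randomness, even though features across stages within one episode are highly correlated and the actions are chosen adaptively by the UCB policy.

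Next I would perform a deterministic perturbation step to strip off the action-dependence. Write $\phi_{i,1}=\bar\phi(s_{i,1})+e_{i,1}$ with $\|e_{i,1}\|_2\le\eta$. Using the identity $\phi\phi^\top-\bar\phi\bar\phi^\top=\phi(\phi-\bar\phi)^\top+(\phi-\bar\phi)\bar\phi^\top$ together with $\|\phi\|_2\le C_\phi$ and $\|\bar\phi\|_2\le C_\phi+\eta$, each difference has operator norm at most $\eta(2C_\phi+\eta)$, so that $\big\|\frac{1}{n-1}\sum_{i<n}\big(\phi_{i,1}\phi_{i,1}^\top-\bar\phi(s_{i,1})\bar\phi(s_{i,1})^\top\big)\big\|_{\mathrm{op}}\le\eta(2C_\phi+\eta)\le c_{\min}/4$ by hypothesis. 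Crucially, this bound is deterministic and holds for whatever action $a_{i,1}$ the adaptive policy selects, so all adaptivity is absorbed into this term and the remaining randomness depends only on the i.i.d. initial states.

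For the remaining empirical average $\frac{1}{n-1}\sum_{i<n}\bar\phi(s_{i,1})\bar\phi(s_{i,1})^\top$ of i.i.d. PSD matrices with mean $\EE_\mu[\bar\phi(s)\bar\phi(s)^\top]\succeq c_{\min}I$ and uniform operator-norm bound $(C_\phi+\eta)^2$, I would apply the matrix Chernoff inequality (Tropp) with deviation $1/2$, obtaining $\lambda_{\min}\ge c_{\min}/2$ except with probability $p\exp\!\big(-c\,(n-1)\,c_{\min}/(C_\phi+\eta)^2\big)$. The dimension prefactor $p\le d$ is then absorbed by the hypothesis $n\ge C_e d$: choosing $C_e$ large enough (tied to $c_{\min}/(C_\phi+\eta)^2$) forces the exponent to dominate $\log d$, converting $d\,e^{-c'(n-1)}$ into $2e^{-c_e n}$. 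Combining via Weyl's inequality with the perturbation bound yields $\lambda_{\min}\!\big(\frac{1}{n-1}X_n^\top X_n\big)\ge c_{\min}/2-c_{\min}/4=c_{\min}/4=c_1$, which is Condition \eqref{cond:minimum-eigenvalue}.

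The main obstacle is justifying the i.i.d. reduction despite adaptivity: the array $\{\phi_{i,h}\}$ is a dependent, martingale-type object because actions and subsequent states depend on the evolving confidence region and policy, so one cannot directly concentrate the full Gram matrix. The resolution is the two-pronged decoupling above — restrict to $h=1$ so that the only randomness is the independent restart $s_{i,1}\sim\mu$, and push every bit of action-dependence into the deterministic $\eta$-perturbation — after which a standard i.i.d. matrix concentration applies without any filtration bookkeeping. A secondary, routine point is calibrating the $n\ge C_e d$ threshold to the dimension factor in matrix Chernoff, which is immediate once $C_e$ is fixed in terms of $c_{\min}$ and $C_\phi$.
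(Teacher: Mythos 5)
Your proof is correct and follows essentially the same route as the paper's: restrict to the first stage via PSD monotonicity, absorb the adaptively chosen actions into a deterministic $\eta(2C_\phi+\eta)$ operator-norm perturbation of the Gram matrix, and then concentrate the i.i.d.\ Gram matrix of $\bar\phi(s_{i,1})$ before combining via Weyl to get $c_{\min}/2 - c_{\min}/4 = c_{\min}/4$. The only deviation is the concentration tool -- you invoke the matrix Chernoff bound for bounded PSD summands where the paper cites Vershynin's sub-Gaussian covariance estimate (5.25) -- and both yield the required event $\lambda_{\min}\ge c_{\min}/2$ with probability $1-2e^{-c_e n}$ once $n\ge C_e d$.
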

\begin{proof}
Denote $\EE_\mu[\bar\phi(s)\bar\phi(s)^\top]$ by $\Sigma_\mu$. As $|\bar\phi(s)^\top v|\le \|\bar\phi(s)\|\le C_\phi$ for any $v\in \mathbb{S}^d$, we have that $\bar\phi(s)$ is subGaussian with variance proxy $(C-\phi/2)^2$.
By (5.25) in \cite{vershynin2010introduction}, there exists some constants $C_e,c_e$ such that with probability at least $1-2e^{-c_en}$,
\begin{align*}
\frac{1}{n-1}\sum_{i=1}^{n-1}\bar\phi(s_{i,1})\bar\phi(s_{i,1})^\top\succeq\frac{1}{2}\Sigma_\mu\succeq \frac{c_{\min}}{2}I
\end{align*} as long as $n\ge C_ed$.

Let $\Delta=\frac{1}{n-1}\sum_{i=1}^{n-1}\bar\phi(s_{i,1})\bar\phi(s_{i,1})^\top-\frac{1}{n-1}\sum_{i=1}^{n-1}\phi_{i,1}\phi_{i,1}^\top$, we have for any $v\in \mathbb{S}^d$ that
\begin{align*}
v^\top \Delta v&=\frac{1}{n-1}\sum_{i=1}^{n-1}\left([\bar\phi(s_{i,1})^\top v]^2-[\phi_{i,1}^\top v]^2\right)\\&\le \frac{1}{n-1}\sum_{i=1}^{n-1}\Big\|\bar\phi(s_{i,1})-\phi_{i,1}\Big\|\cdot\Big\|\bar\phi(s_{i,1})+\phi_{i,1}\Big\|\\&\le \eta(2C_\phi+\eta).
\end{align*}
Hence $\|\Delta\|\le \eta(2C_\phi+\eta)\le \frac{c_{\min}}{4}$. It follows that
\begin{align*}
\frac{1}{n-1}\sum_{i=1}^{n-1}\phi_{i,1}\phi_{i,1}^\top\succeq \frac{c_{\min}}{4}I.
\end{align*}
To conclude, note that 
\begin{align*}
\frac{1}{n-1}\sum_{i=1}^{n-1}\sum_{h=1}^H\phi_{i,h}\phi_{i,h}^\top
\succeq\frac{1}{n-1}\sum_{i=1}^{n-1}\phi_{i,1}\phi_{i,1}^\top\succeq \frac{c_{\min}}{4}I.
\end{align*}
\end{proof}

\begin{remark}
The condition $n\ge C_e d$ requires a warm start. For $n\le C_e d$, one can use a fixed policy to generate samples. This will not affect the total regret as long as $dH$ is neglegible compared to $\sqrt{NH^5}$.
\end{remark}

\section{Regret Analysis for UCB-TQL under Composite MDPs}

In this section, we provide proof for Theorem \ref{thm:regret-UCB-TQL}. The pipeline is similar to the single-task setting.

We start by constructing the confidence region.
To that end, again denote
Denote $X_n^{(1)}=\begin{bmatrix}
\phi_{1,1}^{(1)\top}\\\cdots\\\phi_{1,H}^{(1)\top}\\\cdots\\\phi^{(1)\top}_{n-1,H}
\end{bmatrix}$and $Y_n^{(1)}=\begin{bmatrix}
\psi_{1,1}^{(1)\top}  \bK_{\psi}^{-1}\\\cdots\\\psi_{1,H}^{(1)\top}  \bK_{\psi}^{-1}\\\cdots\\\psi_{n-1,H}^{(1)\top} \bK_{\psi}^{-1}
\end{bmatrix}$, we have the observational model as follows:
\begin{align*}
Y_n^{(1)}=X_n^{(1)}(L^*+S^{*(0)}+D^*)+W_n
\end{align*}
where $W_n:=Y_n^{(1)}-X_n^{(1)}(L^*+S^{*(0)}+D^*)$.

By the optimality condition, we have 
\begin{align*}
\|Y_n^{(1)}-X_n^{(1)}(\hat L+\hat S^{(0)}-\hat D_n)\|_F^2\le \|Y_n^{(1)}-X_n^{(1)}(L^*+S^{*(0)}+D^*)\|_F^2.
\end{align*}

After some calculation, we obtain
\begin{align*}
\|X_n^{(1)}(\hat D_n-D^*)\|_F^2&\le 2\Big\langle Y_n^{(1)}-X_n^{(1)}(\hat L+\hat S^{(0)}-D^*),X_n^{(1)}(\hat D_n-D^*)\Big\rangle\\
&=2\Big\langle  
X_n^{(1)\top}\left(W_n+X_n^{(1)}(L^*-\hat L+S^{*(0)}-\hat S^{(0)})\right),\hat D_n-D^*
\Big\rangle\\&\le
2\Big\| 
X_n^{(1)\top}W_n
\Big\|_\infty \Big\|\hat D_n-D^*\Big\|_1+2\Big\|X_n^{(1)\top}X_n^{(1)}(L^*-\hat L+S^{*(0)}-\hat S^{(0)})\Big\|_F\Big\|\hat D_n-D^*\Big\|_F\\&\le
2\sqrt{2e}\Big\| 
X_n^{(1)\top}W_n
\Big\|_{\max} \Big\|\hat D_n-D^*\Big\|_F+2\Big\|X_n^{(1)\top}X_n^{(1)}(L^*-\hat L+S^{*(0)}-\hat S^{(0)})\Big\|_F\Big\|\hat D_n-D^*\Big\|_F.
\end{align*}

We have, similar as before, with probability at least $1-\delta$,
\begin{align*}
\| 
X_n^{(1)\top}W_n
\Big\|_{\max}\lesssim C_{\phi\psi}\sqrt{nH\log\left(\frac{d}{\delta}\right)}.
\end{align*}
If 
\begin{align*}
\lambda_{\min}\left(\frac{X_n^{(1)\top} X_n^{(1)}}{n-1}\right)\ge c_1,\\
\lambda_{\max}\left(\frac{X_n^{(1)\top} X_n^{(1)}}{n-1}\right)\le C_1,
\end{align*}
then it holds that 
\begin{align*}
\|\hat D_n-D^*\|_F^2\lesssim  \frac{eC_{\phi\psi}^2H\log\left(\frac{d}{\delta}\right)}{n}+\|L^*-\hat L\|_F^2+\|S^{*(0)}-\hat S^{(0)}\|_F^2.
\end{align*}

Suppose at the first stage, we established $\|L^*-\hat L\|_F^2+\|S^{*(0)}-\hat S^{(0)}\|_F^2\le \beta_{N_0}$ with probability at least $1-1/(2N^2H)$, where \begin{align*}\beta_{N_0}=\frac{C_\beta H\log(d/\delta)}{N_0}\left(r(C_\phi C'_{\psi})^2 +sC_{\phi\psi}^2\right),
\end{align*}as in \eqref{def:beta}. 

\paragraph{Naive CR}
Recall that we can construct a naive confidence region as 
\begin{align*}
\mathcal{B}_n=\left\{(L,S) \mid \norm{L-\hat L_n}_F^2+\norm{S-\hat S_n}_F^2\leq \beta^{(1)}_n\right\}
\end{align*}
where $\beta^{(1)}_n:=\beta_{N_0}+\frac{e C_{\phi\psi}^2H\log\left(dNH\right)}{n}$.

When using this confidence region to construct optimistic value functions,
we can plug the value of $\beta_n^{(1)}$ into  \eqref{equ:regret-intermediate}. It follows that
\begin{align*}
\text{Regret}
( N  H) &\lesssim C_{\phi}C_{\psi}H^2\sum_{n=1}^{ N}\sqrt{\beta_n^{(1)}}+1\\&\lesssim C_{\phi}C_{\psi}H^2\left( N \sqrt{\beta_{N_0}}+\sqrt{e C_{\phi\psi}^2 N H\log\left(dNH\right)}\right).
\end{align*}
\begin{remark}
When $N_0\gg  N $, this bound is dominated by the second term, which depends on $C_{\phi}$. 
\end{remark}

\paragraph{Tight CR}
As illustrated in the proof sketch, we can achieve better rate by constructing a more fine-grained confidence region as 
\begin{align*}
\tilde{\calB}_n=\left\{(L,S,D) \mid \norm{L-\hat L^{(0)}}_F^2+\norm{S-D-\hat S^{(0)}}_F^2\le \beta_{N_0},\ \norm{D-\hat D_n}_F^2\le \beta_n^{(1)},\ \norm{D}_0\le e\right\}.
\end{align*}
It is straightforward to show that $(L^*,S^{*(0)},D^*)\in \tilde\calB_n$.
We then carry out a more refined one-step analysis, similar in the vein of Section 3.2.2.
In particular, let $(\widetilde{ L},\widetilde{ S},\widetilde{D})=\arg\max_{\substack{ L, S \in \tilde{\calB}_n}}\phi(s, a)^{\top} ( L+ S)  \Psi^{\top} V_{n, h+1}$, it holds that
\begin{equation}
     \begin{aligned}
     &Q_{n,h}(s_{n,h},a_{n,h})-\paran{r(s_{n,h},a_{n,h})+[P_h V_{n,h+1}](s_{n,h},a_{n,h})} \\
     &\leq \norm{\phi_{n, h}^{\top}(\widetilde{ L} -  L^{*})}_2\norm{ \Psi^{\top} V_{n, h+1}}_{2}+\norm{\phi_{n, h}^{\top}\paran{\widetilde{ S}-\widetilde{D}-  S^*+D^*}}_2\norm{ \Psi^{\top} V_{n, h+1}}_{2} +\left|\phi_{n, h}^{\top}\paran{\widetilde{D}- D^*} \Psi^{\top} V_{n, h+1}\right| \\
     & \leq H\paran{C_{\psi}\norm{\phi_{n, h}^{\top}(\widetilde{ L} -  L^*)}_2+C_{\psi}\norm{\phi_{n, h}^{\top}\paran{\widetilde{ S}-\widetilde{D}-  S^*+D^*}}_2}+\sqrt{2e}HC_{\phi}'C_{\psi}\norm{\widetilde{D}-D^*}_F \\
     & \leq H\paran{C_{\psi}\norm{\phi_{n, h}}_2\norm{\widetilde{ L} -  L^{*}}_F + C_{\psi}\norm{\phi_{n, h}}_2\norm{\widetilde{ S}-\widetilde{D}-  S^*+D^*}_F}+\sqrt{2e}HC_{\phi}'C_{\psi}\norm{\widetilde{D}-D^*}_F \\
     & \leq C_{\phi}C_{\psi}H\paran{\norm{\widetilde{ L}-  L^*}_F + \norm{\widetilde{ S}-\widetilde{D}-  S^*+D^*}_F}+\sqrt{2e}HC_{\phi}'C_{\psi}\norm{\widetilde{D}-D^*}_F \\
     & \leq C_{\phi}C_{\psi}H\sqrt{2\beta_{N_0}}+C_{\phi}'C_{\psi}H\sqrt{4\beta_n^{(1)}e},
     \end{aligned}
    \end{equation}
where the sparsity constraint on $D$ is used in bound the third term in the second line.
Combined with the one-step error in the regret decomposition~\eqref{equ:regret-intermediate}, we obtain that
\begin{align*}
\text{Regret}
( N  H) &\lesssim C_{\phi}C_{\psi}H^2\sum_{n=1}^{ N }\sqrt{\beta_{N_0}}+C_{\phi}'C_{\psi}H^2\sum_{n=1}^{ N }\sqrt{4\beta_n^{(1)}e}\\&\lesssim 
\left(C_{\phi}+C_{\phi}'\sqrt{e}\right)C_{\psi}H^2 N \sqrt{\beta_{N_0}}+C_{\phi}'C_{\psi}H^2\sqrt{e C_{\phi\psi}^2 N H\log\left(dNH\right)}.
\end{align*}
Plugging the definition of $\beta_{N_0}$ completes the proof.

\begin{remark}
When $N_0\gg N$, this bound is dominated by the second term, which does not depend on $C_{\phi}$, but rather on $C_{\phi}'$. It is tighter than the rate of naive CR approach as $C_{\phi}\ge C_{\phi}'$.
\end{remark}
\end{center}


\end{appendices}

\end{document}